\setlist{leftmargin=5.5mm}
\newtheorem{lemma}{Lemma}
\title{Recurrent neural networks: vanishing and exploding gradients are not the end of the story}
\author{%
  Nicolas Zucchet \\
  Department of Computer Science\\
  ETH Zürich\\
  \texttt{nzucchet@ethz.ch} \\
  % examples of more authors
  \And
  Antonio Orvieto \\
  ELLIS Institute Tübingen\\
  MPI for Intelligent Systems\\
  Tübingen AI Center \\
  \texttt{antonio@tue.ellis.eu}
}
\begin{document}

\doparttoc % Tell to minitoc to generate a toc for the parts
\faketableofcontents % Run a fake tableofcontents command for the partocs

\newcommand{\norm}[1]{\left\lVert#1\right\rVert}
\newcommand{\snorm}[1]{\lVert#1\rVert}
\newcommand{\pder}[2]{\frac{\partial#1}{\partial #2}}
\newcommand{\spder}[2]{\partial_#2#1}
\newcommand{\der}[2]{\frac{\mathrm{d}#1}{\mathrm{d} #2}}
\newcommand{\sder}[2]{\mathrm{d}_#2#1}
\newcommand{\Id}{\mathrm{Id}}
\newcommand{\evalat}[2]{\left. #1 \right\rvert_{#2}}
\newcommand{\E}{\mathbb{E}}

\vspace{-5mm}
\maketitle
\vspace{-5mm}

\begin{abstract}
  Recurrent neural networks (RNNs) notoriously struggle to learn long-term memories, primarily due to vanishing and exploding gradients. The recent success of deep state-space models (SSMs), a subclass of RNNs, to overcome such difficulties challenges our theoretical understanding. In this paper, we delve into the optimization challenges of RNNs and discover that, as the memory of a network increases, changes in its parameters result in increasingly large output variations, making gradient-based learning highly sensitive, even without exploding gradients. Our analysis further reveals the importance of the element-wise recurrence design pattern combined with careful parametrizations in mitigating this effect. This feature is present in deep SSMs, as well as in other architectures, such as LSTMs. Overall, our insights provide a new explanation for some of the difficulties in gradient-based learning of RNNs and why some architectures perform better than others.
\end{abstract}

Recurrent neural networks \citep[RNNs;][]{rumelhart_sequential_1986, elman_finding_1990} have long been the canonical architecture for modeling temporal data \cite{hochreiter_long_1997, sutskever_sequence_2014}. However, they are notoriously difficult to train on long sequences, as error signals flowing backward in time tend to either vanish or explode \cite{hochreiter_untersuchungen_1991, bengio_learning_1994, hochreiter_gradient_2001, pascanu_diculty_2013}. Attention mechanisms \cite{bahdanau_neural_2015}, as featured in transformers \cite{vaswani_attention_2017}, address these issues by enabling direct token-to-token communication, considerably simplifying signal propagation across long time intervals. Yet, their superior performance comes with increased computational and memory costs, due to their quadratic scaling in the sequence length. This limitation has motivated significant research aimed at making transformers more efficient \cite{fournier_practical_2023, katharopoulos_transformers_2020, dao_flashattention_2022, fedus_switch_2022, ma_era_2024}.

A promising line of research in this direction involves a new type of linear recurrent networks known as deep state-space models \cite[SSMs;][]{gu_combining_2021, gu_efficiently_2022, gupta_diagonal_2022, smith_simplified_2023, orvieto_resurrecting_2023, gu_mamba_2023, de_griffin_2024}. These models trade expressivity for faster training speed, and they have been shown to be particularly effective at capturing long-range dependencies. In this paper, we wonder whether this effectiveness can be solely attributed to their ability to avoid vanishing and exploding gradients. The simplicity of such models presents an opportunity for in-depth theoretical analysis. We focus on signal propagation within these models. 

After reviewing classical results on recurrent neural networks in Section~\ref{sec:related_work}, we demonstrate that they can suffer from an understudied problem: as the recurrent network encodes longer memories, the network's activity becomes increasingly sensitive to changes in its parameters, even when its dynamics remains stable. In Section~\ref{sec:mitigating}, we then show that SSMs, as well as other architectures such as LSTMs, are well equipped to mitigate this issue. We then analyze a simple teacher-student task (Section~\ref{sec:lsi}). This task already reveals the remarkable complexity underlying the learning of linear recurrent networks and enables us to verify empirically our theory. Finally, we discuss how our findings extend to more realistic scenarios (Section~\ref{sec:realistic}), both in terms of architectures and data. Overall, our paper provides theoretical insights into the training of recurrent neural networks, an area where such analysis is rare. While vanishing and exploding gradients are well-known challenges, our results demonstrate that this is not the end of the story - there exists an additional layer of complexity beyond them.

\section{Vanishing and exploding gradients}
\label{sec:related_work}

Let us first introduce the notations we will be using throughout the rest of the paper. We consider a recurrent neural network with hidden state $h_t$, update function $f_\theta$ parametrized by $\theta$, and input sequence $(x_t)$. The average performance of the network is measured by a loss $L$. We have
\begin{equation}
    h_{t+1} = f_\theta(h_t, x_{t+1})  ~~ \mathrm{and} ~~ L = \E \left [ \sum_{t=1}^T L_t(h_t) \right ].
\end{equation}
The gradient of the instantaneous loss $L_t$ with respect to the parameters $\theta$ is then equal to
\begin{equation}
    \label{eqn:gradient_calculation}
    \der{L_t}{\theta} = \pder{L_t}{h_t} \der{h_t}{\theta} = \pder{L_t}{h_t} \sum_{t'\leq t} \der{h_t}{h_{t'}} \pder{f_\theta}{\theta}(h_{t'-1}, x_{t'})
\end{equation}
In the equation above, we used $\partial$ to denote partial derivatives and $\mathrm{d}$ for total derivatives. Using this notation enables us to distinguish between $\partial_{h_t} L_t$, which corresponds to the error backpropagated from the current loss term to the hidden state through the readout function, and $\mathrm{d}_{h_t} L$, which accumulates the errors that are backpropagated through the future hidden state values. In particular, $\spder{L}{{h_t}} = \partial_{h_t} L_t$ and $\sder{L}{{h_t}} = \spder{L_{t}}{{h_t}}(h_{t}) + \sum_{t' > t} \sder{L_{t'}}{{h_t}}(h_{t'})$. When stacking several recurrent layers on top of each other, $\spder{L}{{h_t}}$ corresponds to the current error being backpropagated to the hidden state $h_t$ through the hierarchy of the network and $\sder{L}{{h_t}}$ to future error signals backpropagated through the recurrence.

Early work \citep{hochreiter_untersuchungen_1991} highlighted the difficulty for gradient descent to make recurrent neural networks remember past inputs that will later be useful to produce a desired behavior. This is due to the fact that error signals flowing backward in time tend to either explode or vanish. The key quantity is
\begin{equation}
    \frac{\mathrm{d}h_t}{\mathrm{d} h_{t'}} = \prod_{i = t'}^{t-1} \frac{\partial h_{i+1}}{\partial h_{i}} = \prod_{i = t'}^{t-1} \frac{\partial f_\theta}{\partial h}(h_i, x_{i+1}).
\end{equation}
One can remark that this quantity exponentially converges to 0 when the spectral radius of the Jacobian $\partial_h f_\theta$ is upper bounded by a constant strictly smaller than 1, and can exponentially explode if there exists some component bigger than 1. The error signal at time $t$ backpropagated to time $t'$ behaves similarly, as $\sder{L_t}{{h_{t'}}} = \partial_{h_t} L_t \, \mathrm{d}_{h_t'} h_t$.
Gradient-based learning of long-term memories is thus difficult: the contribution of past hidden states to the current loss becomes either negligible or predominant as the time span considered increases.

Since then, the analysis has been refined \cite{bengio_learning_1994, hochreiter_gradient_2001, pascanu_diculty_2013} and the development of recurrent architectures has mostly been driven by the desire to solve this pathological issue. Most famously, the LSTM \cite{hochreiter_long_1997} unit, and later on the GRU \cite{cho_properties_2014}, solve this problem by using memory neurons that facilitate direct information storage and retrieval, and thus facilitate error backpropagation. Other approaches to solving this problem, to name a few, involve gradient clipping \cite{mikolov_statistical_2012, pascanu_diculty_2013}, activity normalization \cite{ioffe_batch_2015, ba_layer_2016, cooijmans_recurrent_2017}, careful weight initialization \cite{le_simple_2015, tallec_can_2018} or enforcing architectural constraints such as hierarchical processing \cite{hihi_hierarchical_1995, mujika_fast-slow_2017}, orthogonal weight matrices \cite{arjovsky_unitary_2016, vorontsov_orthogonality_2017, helfrich_orthogonal_2018} and oscillations \cite{rusch_coupled_2021, keller_neural_2023, park_persistent_2023}.

\section{The curse of memory}
\label{sec:curse_memory}

According to common deep learning wisdom, it is often believed that solving the vanishing and exploding gradients problem enables recurrent neural networks to learn long-term dependencies. We challenge this view and question: is solving those issues really enough to ensure well-behaved loss landscapes? We answer negatively by showing that gradients can explode as the memories of the network are kept for longer, even when the dynamics of the network remains stable.

\subsection{Intuition}

Recurrent neural networks have something special: the very same update function $f_\theta$ is applied over and over. Therefore, modifying the parameters $\theta$ will not only influence one update, as changing the weights of a given layer in a feedforward neural network would, but all of them. As the memory of the network gets longer, the hidden states keep a trace of the effects of more updates. Hidden states thus become increasingly sensitive to parameter changes. This is the \textit{curse of memory}. We borrow the term from \cite{li_approximation_2022, wang_inverse_2024}, although we use it in a different context, and note that \citet{martens_learning_2011} hypothesized that such a phenomenon could arise in RNNs and hinder their optimization.

Let us formalize our intuition by considering the sensitivity of the hidden state $h_t$ on the parameters $\theta$:
\begin{equation}
    \der{h_t}{\theta} = \sum_{t' \leq t} \der{h_t}{h_{t'}} \pder{f_\theta}{\theta}(h_{t'-1}, x_{t'}).
\end{equation}
When information stays in the network's memory for longer, the number of non-negligible Jacobian $\sder{h_t}{{h_{t'}}}$ terms increases. As a result, the magnitude of this sensitivity increases when the network encodes longer-term dependencies, and learning $\theta$ becomes trickier. It is crucial to observe that this phenomenon arises even when exploding gradients are removed from the picture by constraining the eigenvalues of the recurrent Jacobian to be smaller than one and ensuring that the network dynamics remains stable. The rest of this section will be dedicated to studying this behavior quantitatively.

\begin{figure}
    \centering
    \includegraphics{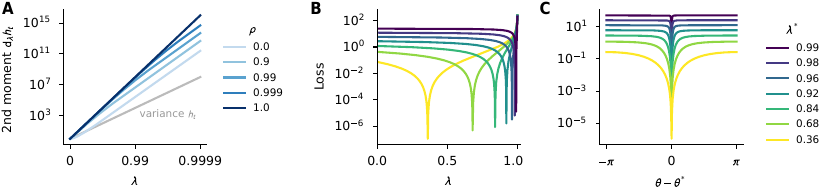}
    \caption{\textbf{Optimization of recurrent neural networks gets harder as their memory increases.} \textbf{A.} Evolution of the second moment of $\mathrm{d}_\lambda h_t$ as a function of the recurrent parameter $\lambda$ and of the input $x$ auto-correlation decay rate $\rho$, when $h_{t+1} = \lambda h_t + x_t$. As the memory of the network increases ($\lambda \rightarrow 1$), $h_t$ becomes more sensitive to changes in $\lambda$, particularly as the elements in the input sequence are more correlated ($\rho \rightarrow 1$). The explosion of $\sder{h_t}{\lambda}$ is faster than the one of $h_t$, as highlighted with the grey line obtained for $\rho = 1$. See Section~\ref{subsec:sig_prop_diag} for more detail. \textbf{B, C.} Illustration of the phenomenon on the toy one-dimensional teacher-student task of Section~\ref{subsec:lsi_1D}, in which the teacher is parametrized by a real number $\lambda^*$ and the student by a complex number $\lambda$. In B, $\lambda$ varies on the real axis, and it varies on the circle of radius $\lambda^*$ parametrized by $\theta$ in C. The loss becomes sharper as information is kept longer in memory, making gradient-based optimization nearly impossible.
    }
    \label{fig:figure1}
\end{figure}

\subsection{Signal propagation in linear diagonal recurrent neural networks}
\label{subsec:sig_prop_diag}

We study how hidden state and gradient magnitudes evolve as the network encodes longer-term dependencies. Ideally, these quantities do not vanish or explode, as it improves the conditioning of the loss landscape \citep{noci_why_2024} and eases optimization \citep{yang_feature_2020, yang_spectral_2023}. We operate under the following assumptions:
\begin{itemize}
    \item[a)] \textbf{Linear diagonal recurrent neural networks}. We restrict ourselves to update functions of the form $f_\theta(h_t, x_{t+1}) = \lambda \odot h_t + x_{t+1}$ with $\lambda$ a vector of the size of $h_t$ and $\odot$ the element-wise product. For ease of exposition, we present results for real-valued $\lambda$ here; see Appendix~\ref{app_subsec:com_derivation} for the complex-valued setting. While this assumption is strong, it allows us to identify some crucial mechanisms and models like S4 \cite{gu_efficiently_2022}, S5 \cite{smith_simplified_2023} and LRUs \cite{orvieto_resurrecting_2023} satisfy it. We later show that our analysis can model some features of more sophisticated networks. Note that we do not consider the input and readout mappings usually featured in recurrent layers as they are feedforward layers and signal propagation within them is already well understood \cite[e.g.,][]{glorot_understanding_2010, he_delving_2015}.
    \item[b)] \textbf{Infinite time horizon}. We consider infinite sequences and initialize the network dynamics at $t_0 = -\infty$. It simplifies our calculations while being a reasonable assumption when the sequences considered are longer than the characteristic timescales of the dependencies we want to learn.
    \item[c)] \textbf{Wide-sense stationarity}. We assume the different quantities that the network receives, which include the inputs $x_t$, to be wide-sense stationary (WSS). A random process $X_t$ is said to be WSS if its auto-correlation function is independent of time, that is, for all $t \in \mathbb{Z}$ and $\Delta \in \mathbb{Z}$,
    $\mathbb{E}_X \left [ X_{t+\Delta} X_t \right ] =: R_X(\Delta)$, where $\E_X$ denotes the expectation over the data. It corresponds to assuming that the statistics of the data are invariant to time shifts. This is a standard assumption when analyzing stochastic processes \cite{pavliotis_stochastic_2014}. It keeps our calculations concise and does not qualitatively affect our conclusions (cf. Section~\ref{sec:realistic}). We discuss how to relax it in Appendix~\ref{app:discussion_wss}.
\end{itemize}
We are now equipped to analyze signal propagation in one recurrent layer, both in the forward and backward passes. We show that both hidden states and backpropagated errors explode as $|\lambda| \rightarrow 1$.

\paragraph{Forward pass.} Here, we are interested in understanding how the hidden state second moment $\E[h_t^2]$ evolves as a function of $\lambda$ and of the input auto-correlation function $R_x$. After a calculation that we defer to Appendix~\ref{app_subsec:com_derivation}, we obtain
\begin{equation}
    \label{eqn:expectation_h}
    \E\left [ h_t^2 \right ] = \frac{1}{1-\lambda^2}\left (R_x(0) + 2\sum_{\Delta \geq 1} \lambda^{\Delta} R_x(\Delta)\right ) \! .
\end{equation}
Importantly, this quantity goes to infinity as longer-term dependencies are encoded within the network, that is $|\lambda| \rightarrow 1$. Additionally, the divergence speed depends on the input data distribution: it increases as consecutive time steps in the input distribution become more correlated (i.e., less of the $R_x(\Delta)$ terms are negligible). This behavior already highlights potential difficulties of gradient-based learning of deep neural networks containing linear recurrent layers as the variance of neural activity can become arbitrarily large, hindering learning abilities of deeper layers.

\paragraph{Backward pass.} Let us first derive the gradient of the loss with respect to $\lambda$. Using the chain rule we have $\sder{L}{\lambda}  = \sum_t \spder{L}{{h_t}} \, \sder{h_t}{\lambda}$. We thus seek to understand how $\mathrm{d}_\lambda h_t$ behaves. We remark that $\sder{h_{t+1}}{\lambda} = \lambda \sder{h_t}{\lambda} + h_t$
so that $\sder{h_t}{\lambda}$ is a low pass filtered version of the hidden state, which is itself a low pass filter version of the inputs. It therefore comes as no surprise that the second moment of $\sder{h_t}{\lambda}$ diverges faster than the one of $h_t$ when $|\lambda| \rightarrow 1$. More precisely, we get
\begin{equation}
    \label{eqn:expectation_dh}
    \E \left [ \left ( \der{h_t}{\lambda} \right )^2 \right ] = \frac{1 + \lambda^2}{(1 - \lambda^2)^3} \left (R_x(0) + 2\sum_{\Delta \geq 1} \lambda^{\Delta} R_x(\Delta)\right ) + \frac{2}{(1 - \lambda^2)^2} \left ( \sum_{\Delta \geq 1} \Delta \lambda^{\Delta} R_x(\Delta) \right )\!.
\end{equation}
We plot the exact behavior of this quantity when the auto-correlation of $x$ satisfies $R_x(\Delta) = \rho^{|\Delta|}$ on Figure~\ref{fig:figure1} and refer the interested reader to Appendix~\ref{app_subsec:com_derivation} for a derivation of Equation~\ref{eqn:expectation_dh}. More generally, the hidden state of the network, and thus its final output, becomes increasingly sensitive to changes in recurrent parameters as the network reaches the edge of dynamical stability ($|\lambda| \rightarrow 1$).

The last quantity we need to consider is the error that is backpropagated to the inputs $x$ of the recurrent layer. It can be observed that the backward pass is dual to the forward pass in the sense that it is a recurrent process that receives backpropagated errors $\spder{L}{{h_t}}$ and it runs in reverse time:
\begin{equation}
    \der{L}{x_t} = \der{L}{h_t} \pder{h_t}{x_t} = \der{L}{h_{t+1}} \pder{h_{t+1}}{h_t} + \pder{L}{h_t} = \lambda \der{L}{h_{t+1}} +  \pder{L}{h_t},
\end{equation}
in which we made use of $\spder{h_t}{{x_{t}}} = 1$. It follows that the analysis we did for the forward pass also holds here. Crucially, this implies that the explosion behavior will be most significant for the recurrent parameters rather than for potential input or readout weights.

\subsection{Extending the analysis to the non diagonal case} 
\label{sec:non_diag}

We now generalize our results to fully connected linear recurrent neural networks of the form $h_{t+1} = Ah_t + x_t$. For the sake of the analysis, we assume that $A$ is complex diagonalizable, that is there exists a complex-valued matrix $P$ and a complex-valued vector $\lambda$ such that $A = P \mathrm{diag}(\lambda) P^{-1}$. Note that this occurs with probability one under random initialization of $A$~\citep{orvieto_resurrecting_2023}. In this case, 
\begin{equation}
    h_t = P h_t^\mathrm{diag} ~~\text{with} ~ h_{t+1}^\mathrm{diag} = \mathrm{diag}(\lambda) h_t^\mathrm{diag} + P^{-1} x_{t+1}
\end{equation}
and
\begin{equation}
    \der{h_t}{A} = \pder{h_t}{P} \pder{P}{A} + \pder{h_t}{h_t^\mathrm{diag}} \der{h_t^\mathrm{diag}}{\lambda} \pder{\lambda}{A} + \pder{h_t}{h_t^\mathrm{diag}} \der{h_t^\mathrm{diag}}{P^{-1}} \pder{P^{-1}}{A}.
\end{equation}
From the analysis above, we know that the dominating term in the limit $|\lambda| \rightarrow 1$ among $\spder{h_t}{P}$, $\sder{h_t}{\lambda}$ and $\sder{h_t}{P^{-1}}$ is $\sder{h_t}{\lambda}$, as $P$ and $P^{-1}$ act as readout and input weights. Given that all other terms do not directly depend on the magnitude of $\lambda$, we have that $\sder{h_t}{A} \simeq \spder{h_t}{{h_t^\mathrm{diag}}} \, \sder{h_t^\mathrm{diag}}{\lambda} \, \spder{\lambda}{A}$; cf. Appendix \ref{subsubsec:app_com_fully_connected} for formal statements. This has two consequences: First, the sensitivity of $h_t$ on $A$ will explode as longer memories are encoded and this directly comes from the eigenvalues of $A$. Second, as each entry of $A$ typically impacts all eigenvalues of the matrix, the explosion behavior will be distributed across all entries, whereas it was concentrated on the eigenvalues for the diagonal case. We will later observe that this has significant practical consequences and partly explains why fully connected linear RNNs are difficult to train. As a side note, we remark that enforcing the matrix $A$ to be orthogonal solves vanishing and exploding gradient issues but these weights may remain sensitive to learn because of the curse of memory.

\vspace{-1mm}
\section{Mitigating the curse of memory}
\label{sec:mitigating}

We have discussed the sensitivity of recurrent networks to parameter updates. Given this problem, how can it be mitigated? We show that recurrent networks with diagonal connectivity are particularly well suited for this purpose. Besides enabling control over the Jacobian and avoiding exploding gradients, they facilitate the mitigation of the curse of memory. We additionally highlight that deep state-space models and gated RNNs inherently incorporate such mechanisms.

\subsection{A solution: normalization and reparametrization}

Both forward and backward passes explode as the network encodes longer memories. When $h_{t+1} = \lambda h_t + x_{t+1}$, we argue that it is relatively straightforward to mitigate this effect. We aim to keep $\E[h_t^2]$, $\E[(\sder{h_t}{\lambda})^2]$ and $\E[(\sder{h_t}{{x_t}})^2]$ independent of $\lambda$, similar to initialization schemes that maintain the magnitude of neural activity constant in deep networks \citep{glorot_understanding_2010, he_delving_2015}, regardless of the layer width~\citep{yang_feature_2020, yang_tensor_2022, yang_spectral_2023}.

\vspace{-2mm}

\begin{wrapfigure}[27]{r}{158pt}  % [] is the estimated number of lines
    \vspace{-14pt}
    \centering
    \includegraphics{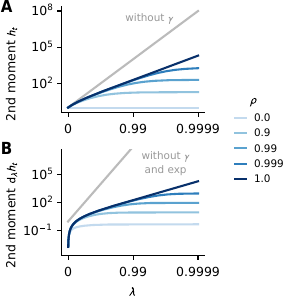}
    \vspace{-5pt}
    \caption{\textbf{Illustration of the effects of normalization and reparametrization.} It can effectively control the magnitude of \textbf{A.} $\E[h_t^2]$ and \textbf{B.} $\E[(\sder{h_t}{\lambda})^2]$ over all $\lambda$ values when the input auto-correlation satisfies $R_x(\Delta) = \rho^{|\Delta|}$ with $\rho = 0$, but does not manage do to so for other type of distributions ($\rho \neq 0$). Here, we use $\gamma(\lambda) = \sqrt{1 - \lambda^2}$, decouple it from $\lambda$ when differentiating, and take $\lambda = \exp(-\exp(\nu))$, as in \cite{orvieto_resurrecting_2023}. The grey line indicates the value the two quantities take without any normalization and reparametrization, when $\rho=1$.}
    \label{fig:effect_param}
\end{wrapfigure}

\paragraph{Input normalization.} A simple way to enforce $\E[h_t^2]$ to stay constant is to introduce a scaling factor $\gamma(\lambda)$ applied to the inputs a neuron receives, that satisfies $\gamma(\lambda)^2 \E[h_t^2] = \Theta(1)$. Given that the backward propagation of output errors to inputs is dual to the forward pass, the role of $\gamma$ in the backward pass will be similar. The value $\gamma$ needs to take therefore both depends on the input distribution to normalize the forward pass, as well as on the output error distribution to normalize the backward pass. Perfect normalization is likely unrealistic, but some normalization can help, as shown in Figure~\ref{fig:effect_param}.A.

\vspace{-1mm}
\paragraph{Eigenvalue reparametrization.} We are now left with keeping the gradient of the loss with respect to $\lambda$ under control. Input normalization partly reduces the memory-induced exploding effect, but not entirely as the variance of $\sder{h_t}{\lambda}$ is much larger than the one of $h_t$ (cf. Fig.\ref{fig:figure1}.A). Reparametrization can close that gap. Indeed, if $\lambda$ is parametrized by $\omega$, we have that $\sder{h_t}{\omega} = \sder{h_t}{\lambda} \sder{\lambda}{\omega}$. Choosing a parameterization that is more and more granular as $\lambda$ goes to 1 thus helps in keeping the magnitude of $\sder{h_t}{\omega}$ constant. Assuming $\gamma$ is independent of $\lambda$ for simplicity, achieving $\E[(\sder{h_t}{\omega})^2] = \Theta(1)$ requires solving the differential equation $\gamma(\lambda)^2 \lambda'(\omega)^2 \E[(\sder{h_t}{\lambda})^2] = 1$. While deriving a universal optimal parametrization is again unrealistic due to dependency on the input distribution, reparametrization definitely helps, as shown in Figure~\ref{fig:effect_param}.B. Figure~\ref{fig:1D-reparametrized} illustrates how it affects the loss landscape.

\vspace{-1.5mm}
\paragraph{The case of complex numbers.} We have not yet discussed the case $\lambda \in \mathbb{C}$, relevant for SSMs such as S4 \cite{gu_efficiently_2022}. We extend our analysis to complex-valued $\lambda$ in Appendix~\ref{subsubsec:app_difficulty_complex}. Briefly, it reveals that changes in the magnitude of $\lambda$ have a similar impact as in the real case, but this similarity does not extend to the angle. To keep the sensitivity on the angle constant, its parametrization must depend on the magnitude of $\lambda$. However, doing so hurts learning, particularly far from optimality, as we exemplify in Appendix~\ref{subsubsec:app_difficulty_complex}. A key implication of this analysis is that the sensitivity of the hidden state on the angle of $\lambda$ explodes as its magnitude approaches 1.

\subsection{Several RNN architectures implicitly alleviate the curse of memory}
\label{subsec:com_architectures}

Deep state-space models, as well as gated RNNs, feature some form of normalization and reparametrization which help keeping signal propagation under control. We discuss how below.

\paragraph{Deep state-space models.} Deep SSMs were originally motivated as discretizations of the differential equation $\dot{h} = Ah + Bx$ \cite{gu_combining_2021}. Naïve discretization of the differential equation yields $h_{t+1} = (\mathrm{Id} + \mathrm{d}t A)h_t + \mathrm{d}t Bx_{t+1}$ which already provides some input normalization. More elaborate discretization schemes, such as the zero-order hold, effectively reparametrize the $A$ matrix, e.g. with $\exp(\mathrm{d}t A)$. Here, diagonalization arises from computational efficiency and simplicity reasons \cite{gupta_diagonal_2022}. While such models can approximate any smooth mappings \cite{boyd_fading_1985, orvieto_universality_2024}, their expressivity remains limited \cite{merrill_illusion_2024}. The next generation of these models, including Mamba~\cite{gu_mamba_2023}, incorporates input-dependent gates which modulate $\mathrm{d}t$ depending on the input $x_t$. The theory we developed above does not strictly apply to this setting as $\mathrm{d}t$ is no longer constant. However, since the recurrence Jacobian remains diagonal, we expect the qualitative behaviors we analyzed to remain.

\paragraph{Gated RNNs.} While the original motivation behind gated RNNs such as LSTMs \cite{hochreiter_long_1997} or GRUs \cite{cho_properties_2014} largely differs from the one of SSMs, they share similar mechanisms. In these networks, the memories stored in hidden neurons can be erased through a forget gate, and incoming inputs can selectively be written in memory through an input gate. Mathematically, this corresponds to hidden state updates of the form $h_{t+1} = f_{t+1} \odot h_t + i_{t+1} \odot x_{t+1}$,
with the forget $f_{t+1}$ and input $i_{t+1}$ gates being independent non-linear functions of $x_{t+1}$ and $h_t$. The forget gate is akin to $\lambda$ and usually involves a sigmoid non-linearity, which has a similar effect as reparametrizing $\lambda$ in the backward pass.
The input gate can act as an input normalization depending on the initialization of the network or if is coupled to the forget gate as in the GRU ($f_t = 1 - i_t$) \cite{tallec_can_2018}. Importantly, the gates here depend on the hidden states and thus make the Jacobian $\spder{h_{t+1}}{{h_t}}$ non diagonal. Yet, we argue that these architectures still have a bias towards diagonality. Indeed, the contributions of the hidden state through the forget and input gates are indirect, and they can be ignored when the weights connecting the hidden states to the gates are small. Empirically, we find that GRUs lie in this regime at initialization, cf. Section~\ref{subsec:app_diagonality}, so that our theory accurately captures signal propagation in GRUs.
We additionally confirm that signal propagation is well behaved in gated RNNs in Section~\ref{sec:realistic}. In regimes in which this approximation does not hold, studying signal propagation requires a much more sophisticated analysis than the one we have done here \citep{chen_dynamical_2018}.

\section{A linear teacher-student analysis}
\label{sec:lsi}

We start our empirical analysis with a teacher-student task using linear recurrent networks \cite{hardt_gradient_2018}. This is arguably the simplest setting in which one can train recurrent networks. Yet, as we shall see, it is already remarkably complex and captures some of the differences between different architectures observed in more realistic settings \cite{orvieto_resurrecting_2023}. It additionally makes it possible to control the characteristic time constants of the data, which is only possible with synthetic data.

Our investigation starts with the one-dimensional setting to provide an intuitive illustration of the consequences of the curse of memory on the loss landscape. We then address the general setting and observe that linear networks indeed suffer from the curse of memory, and that the remedies we studied in the last section are effective. We additionally find that diagonality greatly modifies the structure of the loss landscape and helps optimizers with adaptive learning rates to compensate for eventual increased sensitivities.

\subsection{The one-dimensional case}
\label{subsec:lsi_1D}

We first consider a student and a teacher following the one-dimensional dynamics $h_{t+1} = \lambda h_t + x_{t+1}$, with complex-valued parameter $\lambda$ for the student and $\lambda^*$ for the teacher. For simplicity, we independently draw each $x_{t+1}$ from a unit normal distribution (its autocorrelation function is $R_x(\Delta) = \delta_{\Delta = 0})$ and note that other input distributions do not qualitatively change the results. The performance of the student is measured by a loss $L$ that averages the per time-step losses $L_t:=\frac{1}{2}|h_t - h_t^*|^2$ over the entire sequence.

This simple model already captures two key difficulties of gradient-based learning of recurrent neural networks. In Figure~\ref{fig:figure1}, we plot the resulting loss landscape for different $\lambda^*$ values, when $\lambda$ evolves on the positive part of the real axis (Fig.~\ref{fig:figure1}.B) and when it evolves on the circle of radius $|\lambda^*|$ in the complex plane (Fig.~\ref{fig:figure1}.C). We restrict $\lambda$s to have absolute values smaller than one: exploding gradients are out of the picture. Still, two difficulties for gradient-based learning appear here. On one side, vanishing gradients lead to flat loss regions that are hard to escape. On the other side, the loss sharpens as the student encodes longer memories because of the curse of memory. As a consequence, gradient-based optimization is extremely tedious, already in this simple example.

\vspace{-5pt}

\subsection{Diagonal connectivity simplifies optimization}
\label{subsec:diagonal_connectivity}

We now move to the general case in which the teacher evolves according to
\begin{equation}
    h_{t+1} = A h_t + Bx_{t+1}  ~~ \mathrm{and} ~~ y_t = Ch_t + Dx_t.
\end{equation}
with $h_t \in \mathbb{R}^n$, $x_t \in \mathbb{R}$ drawn i.i.d. from $\mathcal{N}(0, 1)$, $A \in \mathbb{R}^{n\times n}$, $B \in \mathbb{R}^{n\times 1}$, $C \in \mathbb{R}^{1\times n}$ and $D \in \mathbb{R}^{1\times 1}$. Here both inputs and outputs are scalars. 

\begin{figure}
    \centering
    \floatbox[{\capbeside\thisfloatsetup{capbesideposition={right,top},capbesidewidth=6cm}}]{figure}[\FBwidth]
    {\caption{\textbf{LRUs are better at replicating a teacher's behavior than linear RNNs.} \textbf{A.} As the teacher encodes longer dependencies ($\nu_0 \rightarrow 1$), the linear RNN struggles to reproduce it, but not the LRU. \textbf{B.} An ablation study ($\nu_0=0.99$) reveals that this gap mainly comes from having a close to diagonal recurrent connectivity matrix. See Section~\ref{subsec:diagonal_connectivity} for more detail.}\label{fig:comparison_RNN}}
    {\includegraphics{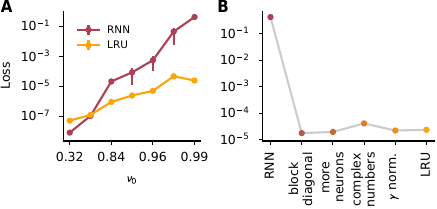}}
\end{figure}

Given the intuition we have developed so far, we expect fully connected linear recurrent neural networks to struggle solving the task when the teacher encodes longer memories, not only because of exploding gradients but also due to the curse of memory. Conversely, diagonality facilitates the eigenvalue reparametrization needed to avoid exploding gradients and make them better behaved. We run the following experiment to verify this intuition. We draw random teachers with hidden dimension $n=10$ and transform the complex eigenvalues of the recurrent matrix $A$ to have magnitudes close to a value $\nu_0$ that we control\footnote{We draw each entry of $A$ from $\mathcal{N}(0, 1/\sqrt{n})$, complex diagonalize it, and apply the transformation $x \mapsto \nu_0 + (1 - \nu_0) \mathrm{tanh}(x)$ to the absolute values of the eigenvalues.}. The larger $\nu_0$ is, the longer the memories encoded by the teacher are. We train a linear RNN, as well as an LRU \cite{orvieto_resurrecting_2023}, with hidden dimension $64$ on this task. The students are therefore largely overparametrized. We chose the LRU architecture to represent deep SSMs due to its simplicity. This architecture uses input normalization and an exponential reparametrization of the eigenvalues, similar to what we analyze in Section~\ref{sec:mitigating}.
Both networks are trained using the Adam optimizer \cite{kingma_adam_2015} and cosine annealing schedule for $10$k steps, on batches of size $128$. To ensure that we are in the infinite sequence length regime, we take the sequences to be of length $300$, that is three times longer than the characteristic time scale of the teacher. Learning rates are tuned separately for each method and training distribution. The results, which we plot in Figure~\ref{fig:comparison_RNN}.A, confirm our intuition: LRUs significantly outperform linear RNNs when long memories have to be learned, despite having 10 times fewer parameters.

Next, we wonder which design choices behind the LRU architecture are crucial to this performance improvement. To this end, we interpolate between a linear RNN and an LRU in the following way: First, we restrict the weight matrix of the linear RNN to a block diagonal with blocks of size 2. Each block can represent a complex number, so the network can represent 32 complex numbers in total. We additionally double the number of hidden neurons.
Second, we change those $2\times 2$ blocks (and their input and output weights) to be complex numbers. 
Finally, we add the $\gamma$ input normalization and the exponential parametrization to obtain the final LRU architecture. We report the results of this experiment in Figure~\ref{fig:comparison_RNN}.B. Surprisingly, we find the gap comes from making the weight matrix block diagonal ($4 \times 4$ blocks are here enough, cf. Figure~\ref{fig:lsi_multiple_heads} in the Appendix). Interestingly, this change reduces the number of parameters the model has and slightly reduces the model expressivity. An explanation of this behavior is therefore likely to be related to the optimization properties of those models. We confirm this hypothesis in the next section.

\begin{figure}
    \centering
    \includegraphics{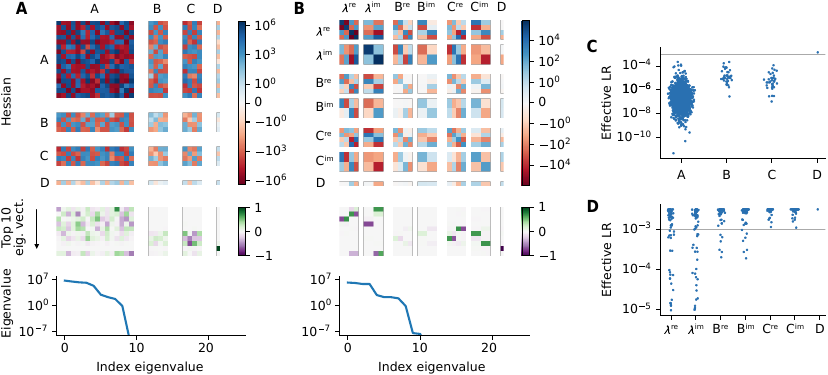}
    \vspace{-0.1cm}
    \caption{\textbf{Differences in learning abilities between fully connected and complex diagonal linear RNNs are due to a better structure of the loss landscape.} \textbf{A, B.} Hessian of the loss at optimality, its 10 eigenvectors with greatest eigenvalues and its eigenspectra for a fully connected RNN (A) and a complex diagonal one (B). The spectra are almost the same. However, the top eigenvectors are concentrated on few coordinates for the complex diagonal one but not for the fully connected one. \textbf{C, D.} This structure makes it possible for Adam to efficiently deal with the extra sensitivity, as shown with the effective learning rates that it uses at the end of learning. For the fully connected one (C), Adam uses small learning rates to compensate for the sensitivity, whereas it can use larger ones for the complex diagonal one without hindering training stability. The horizontal grey line shows the learning rate used, which is here $10^{-3}$.}
    \label{fig:figure3}
\end{figure}

\subsection{On the importance of adaptive learning rates}

So far, our results highlight the importance of having a close to diagonal recurrent connectivity matrix. In this section, we show that this parametrization alone does not mitigate any exploding behavior but modifies the structure of the loss landscape, making it possible for optimizers with adaptive learning rates to compensate for these behaviors.

To demonstrate this, we consider the Hessian of the loss:
\begin{equation}
    \der{^2 L}{\theta^2} = \sum_t \E_x \left [ \der{h_t}{\theta} \pder{^2L_t}{h_t^2} \der{h_t}{\theta}^\top + \pder{L_t}{h_t} \der{^2 h_t}{\theta^2} \right ]\!.
\end{equation}
If the network can perfectly fit the target data, which is the case in the experiments above, the second term vanishes at optimality. We plot the Hessian at optimality in Figure~\ref{fig:figure3}.A and B for a standard linear recurrent network and one with complex diagonal parametrization, both with $4$ hidden neurons ($\nu_0 = 0.99$). We observe that the eigenvalue spectra are similar for the two architectures, both exhibiting large terms that are characteristic of the curse of memory, which makes learning with stochastic gradient descent almost impossible\footnote{The gradient Lipschitz constant $L$ of the loss equals the maximum Hessian eigenvalue~\cite{nesterov_lectures_2018}. This quantity sets a bound $2/L$ for the maximum globally stable learning rate. While convergence might happen in a subspace, it is generally aligned with the top Hessian eigenspace near the solution~\cite{cohen_gradient_2020, gur-ari_gradient_2018}.}. However, their structures differ. For the fully connected linear RNN, the top eigenvectors are distributed over many coordinates, whereas they are concentrated on a few coordinates for the complex diagonal one. This feature aids adaptive optimization \cite[e.g.,][]{pan_toward_2023}: adapting to large curvature is much easier for Adam when the pathological directions are aligned to the canonical basis. This is what we observe in practice.

In Figure~\ref{fig:figure3}.C and D, we compare the effective learning rate used by Adam, which we compute by providing a vector of ones to the optimizer. For the dense linear RNN, the adaptive learning rates cannot compensate for the intricate coupling between components, resulting in small learning rates overall. Conversely, the sensitive directions of complex diagonal RNNs are concentrated on few parameters, which adaptive learning rates can compensate for. This leads to more targeted and overall larger learning rates, significantly speeding up learning. As a side note, the complex eigenvalues of the teacher come in conjugate pairs. However, during training, the complex values of the complex RNN are not conjugates of each other, thereby increasing Hessian diagonality. 
Finally, performing this analysis for the LRU, we find that the Hessian spectrum is similar to the diagonal setting and that the exploding dimensions of the Hessian are almost exclusively due to the angle parameter, consistently with our theoretical analysis; see Figure~\ref{fig:landscape_LRU}.A and C. The loss landscape for S4 model \cite{gu_efficiently_2022} can be qualitatively similar to the complex diagonal RNN or to the LRU, depending on which regime it is in; see Figure~\ref{fig:landscape_LRU}.B and D.

Before concluding this section, we investigate whether certain eigenvalue distributions can break the diagonal structure of the Hessian, thereby complicating optimization and increasing the need for eigenvalue reparametrization. In Appendix~\ref{subsec:app_structure_hessian}, we provide a theoretical quantification of the intuitive result that more concentrated eigenvalues lead to less diagonal Hessian. Consequently, the performance gap between complex-valued diagonal networks and LRUs widens, although the former still greatly outperform their fully-connected counterpart (see Figure~\ref{fig:eigval_concentration}). An important corollary is that increasing the number of hidden neurons breaks the diagonal structure of the loss landscape, thus reducing the effectiveness of optimizers with adaptive learning rates in mitigating the curse of memory. This observation may explain why \citet{orvieto_resurrecting_2023} reported a more substantial performance improvement from eigenvalue reparametrization than what we observe in our study (cf. block diagonal vs. LRU in Figure~\ref{fig:comparison_RNN}.B).

\section{Signal propagation in deep recurrent networks at initialization}
\label{sec:realistic}

The ultimate goal of our theoretical quest is to gain insights into the training of practical recurrent network architectures. Specifically, we aim to verify whether the trends established theoretically and in controlled experiments hold in practice, by studying signal propagation at initialization on a realistic next-token prediction natural language processing task.

\begin{figure}
    \centering
    \includegraphics{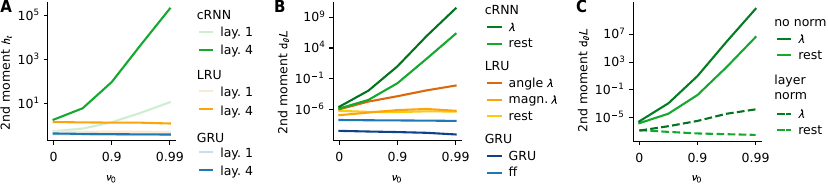}
    \vspace{-0.5cm}
    \caption{\textbf{Signal propagation in deep recurrent networks at initialization is consistent with our theory.} \textbf{A.} $\E[h_t^2]$ after the first and the fourth layer, as a function of the exponential decay parameter $\nu_0$, for complex-valued diagonal RNN (cRNN), LRU, and GRU recurrent layers. The input normalization present in the LRU and in the GRU effectively keeps neural activity constant across $\nu_0$ values. \textbf{B.} Comparison of the evolution of the loss gradient $\E[(\mathrm{d}_\theta L)^2]$ for the different recurrent layers and specific groups of parameters. For the complex diagonal RNN, the gradients of all parameters explode and in particular the ones of the recurrent parameters, whereas only the ones of the angle of $\lambda$ explode for the LRU, consistently with the theory. Error signal propagation in GRUs is under control: the magnitude of the gradients is independent of $\nu_0$. The GRU-specific parameters exhibit smaller gradients than the feedforward (ff) ones. \textbf{C.} Layer normalization keeps the overall gradient magnitude under control in cRNNs. Batch normalization yields similar results.}
    \label{fig:practice}
\end{figure}

To that matter, we provide sentences as input to deep recurrent networks that contain four blocks and use a next-token prediction loss to measure their performance. Each block consists of a recurrent layer followed by a feedforward gated linear unit \cite{dauphin_language_2017}. By default, there are no normalization layers in this architecture. More details can be found in Appendix~\ref{subsec:app_exp_setup_practice}. We empirically study how $\E[h_t^2]$ and $\E[(\mathrm{d}_\theta L)^2]$ evolve when the characteristic time scale of the recurrent layers, controlled through $\nu_0$, increases. We compare three different recurrent layers: a complex-valued diagonal RNN (cRNN), a LRU and a GRU initialized with the Chrono initialization \cite{tallec_can_2018}.

The results are consistent with our theory. Complex-valued RNNs suffer from the curse of memory and recurrent parameters grow faster to infinity than the rest as $\nu_0$ goes to 1. Perhaps more surprisingly, a finer grain analysis reveals that the gradient magnitude is independent of the layer; see Figure~\ref{fig:layer-wise-practice}. LRUs almost perfectly mitigate exploding behaviors in the forward pass (Figure~\ref{fig:practice}.A) as well as in the backward pass (Figure~\ref{fig:practice}.B), except for the angle parameter, consistently with our previous analysis.
We also wonder whether layer normalization can replace the input normalization and reparametrization of the LRU. We find that it mitigates the memory-induced gradient explosion at the macroscopic level (Figure~\ref{fig:practice}.C), but it likely kills any learning signal for the smallest eigenvalues \cite{orvieto_resurrecting_2023}. Finally, the GRU manages to keep the gradient magnitude constant over different characteristic time constants, consistently with the intuition we developed in Section~\ref{subsec:com_architectures}. Preliminary experiments revealed that same trends also hold for LSTMs.

The results presented above for the GRU align qualitatively with the intuition developed throughout the paper. We now consider how well our theory can quantitatively explain this behavior. The primary difference between our simple model and GRUs is that the $\lambda$ values, referred to as forget gates in the GRU terminology, depend on both inputs and hidden states, and are therefore not constant. Interestingly, we find that GRUs almost behave like the diagonal linear RNNs we have focused on in this paper, particularly for slowly decaying recurrent neurons with high $\nu_0$ values (see Appendix~\ref{subsec:app_diagonality}). Consequently, applying our theory as if this context-dependency does not exist only introduces minor approximation errors, which we confirm empirically in Appendix~\ref{subsec:app_theory_for_GRU}. Given the similarity of the Chrono initialization to those used in modern architectures like Mamba \cite{gu_mamba_2023} and Hawk \cite{de_griffin_2024}, we expect our theory to also serve as a good proxy for studying signal propagation in these models at initialization.

\section{Conclusion}

Vanishing and exploding gradients complicate the learning of recurrent networks, but solving these problems is not enough. We uncovered yet another difficulty of training such networks, which is rooted in their iterative nature and arises at the edge of dynamical stability. Reparametrizations and adaptive learning rates can effectively mitigate this behavior in practice, and diagonalizing the recurrence simplifies both. Our analysis additionally reveals the complexity of learning the angle of complex eigenvalues, which may explain why complex numbers were not found to be useful in most recent state-space model architectures \cite{gu_mamba_2023, de_griffin_2024}. 

A side finding of our study is the symbiosis between independent modules, which are here neurons and can be more generally small heads, with adaptive learning rate optimizers in linear recurrent networks. Such a design pattern has promising properties: it facilitates online learning~\cite{zucchet_online_2023} and compositional generalization~\cite{goyal_recurrent_2021}, allows for high level of parallelization~\cite{de_griffin_2024}, and matches, at a high level, the modular organization of the cortex in cortical columns \cite{mountcastle_columnar_1997}. Understanding how to increase the expressivity of small linear modules while keeping their great optimization properties constitutes a promising avenue for future research.

\section*{Limitations}
The theory we introduced, with its focus on signal propagation, only addresses the training dynamics of recurrent neural networks. Consequently, it does not provide insights into other important questions such as generalization abilities or memory capacities of these networks. The main assumption underlying this analysis is that the recurrence is both diagonal and linear. While this approach offers valuable insights, it can only approximate signal propagation in more sophisticated architectures. Given the simplicity of the analytical tools employed, there is little hope that this framework can be extended to more general settings without significant modifications.

\section*{Acknowledgments}

The authors thank Robert Meier, João Sacramento, Guillaume Lajoie, Ezekiel Williams, Razvan Pascanu, Imanol Schlag and Bobby He for insightful discussions. Nicolas Zucchet was supported by an ETH Research Grant (ETH-23 21-1) and Antonio Orvieto acknowledges the financial support of the Hector Foundation.

\newpage
{
\small

\bibliography{references}
\bibliographystyle{unsrtnat}

}

%%%%%%%%%%%%%%%%%%%%%%%%%%%%%%%%%%%%%%%%%%%%%%%%%%%%%%%%%%%%

\appendix

\newpage
 
\renewcommand{\ptctitle}{Table of contents}

\setcounter{tocdepth}{3}  % to include subsections
\setcounter{parttocdepth}{3}  % to include subsections
\renewcommand \thepart{}
\renewcommand \partname{}
\renewcommand \thepart{Appendix}
\addcontentsline{toc}{section}{Appendix} 
\part{} % Start the appendix part
\parttoc % Insert the appendix TOC

\newpage

\section{Definition of the recurrent networks we use}

In this section, we rigorously define all the architectures we use in the main text and in the appendix, as well as precisely describe how we initialize them.

\subsection{Linear recurrent neural network}
\label{app:detail_rnn}

Let us start by introducing the linear recurrent neural network we are using. It satisfies
\begin{align}
    h_0 &= 0\\
    h_{t+1} &= A h_t + B x_{t+1}\\
    y_t &= C h_t + D x_t
\end{align}
with $x_t \in \mathbb{R}^{d_\mathrm{in}}$, $h_t \in \mathbb{R}^{n}$, $y_t \in \mathbb{R}^{d_\mathrm{out}}$, $A \in \mathbb{R}^{n \times n}$, $B \in \mathbb{R}^{n \times d_\mathrm{in}}$, $C \in \mathbb{R}^{d_\mathrm{out} \times n}$ and $D \in \mathbb{R}^{d_\mathrm{out} \times d_\mathrm{in}}$. We draw the entries of $B$, $C$ and $D$ from independent truncated normal distributions with fan\_in scaling. We draw each entry of $A$ from $\mathcal{N}(0, 1/\sqrt{n})$ independently and then apply the following postprocessing to it: First we complex diagonalize $A$, which we can do almost surely. Note $\lambda$ its eigenvalues. We then transform them according to
\begin{equation}
    \lambda \leftarrow \left ( \nu_0 + (1 - \nu_0) \tanh(|\lambda|) \right ) \exp \left (i \frac{\mathrm{angle}(\lambda)}{\pi} \theta_0\right )
\end{equation}
with $\nu_0$ and $\theta_0$ two scalars that we control. This transformation has several benefits: we are guaranteed that the magnitude of $\lambda$ is within $[\nu_0, 1]$ (and in $[\nu_0, \nu_0 + (1- \nu_0) \tanh(1)]$ in the limit $n \rightarrow \infty$ as the eigenvalues of $A$ stay within the unit circle in that limit), and conjugate pairs of eigenvalues remain conjugate. This last point ensures that the resulting matrix remains real without having to change the eigenvectors. When we split the connectivity matrix $A$ into several independent blocks, we initialize each block separately with the scheme described above.

\subsection{Complex-valued RNN and linear recurrent unit (LRU)}
\label{app:detail_lru}

Both architectures have recurrence of the form
\begin{align}
    h_0 &= 0\\
    h_{t+1} &= \lambda \odot h_t + \gamma \odot Bx_t\\
    y_t &= \mathrm{Re}[Ch_t + D x_t]
\end{align}
with $\odot$ the element-wise product, $x_t \in \mathbb{R}^{d_\mathrm{in}}$, $h_t \in \mathbb{C}^{n}$, $y_t \in \mathbb{R}^{d_\mathrm{out}}$, $\lambda \in \mathbb{C}^{n}$, $B \in \mathbb{C}^{n \times d_\mathrm{in}}$, $\gamma \in \mathbb{R}^{n}$, $C \in \mathbb{C}^{d_\mathrm{out} \times n}$ and $D \in \mathbb{R}^{d_\mathrm{out} \times d_\mathrm{in}}$. 

For the complex-valued linear RNN, we take
\begin{align}
    \lambda &= \lambda^\mathrm{re} + i \lambda^\mathrm{im}\\
    \gamma &= 1
\end{align}
so that it can be considered as parametrizing the diagonalized version of the linear RNN.

For the LRU \cite{orvieto_resurrecting_2023}, we take
\begin{align}
    \lambda &= \exp(-\exp(\omega_\nu)) \exp(i \exp(\omega_\theta))\\
    \gamma &= \exp(\omega_\gamma).
\end{align}

For both architectures, we use the LRU initialization so that $\lambda$ is uniformly distributed on the ring between the circles of radii $\nu_0$ and $1$, with absolute angle restricted to be below $\theta_0$. For the LRU, we initialize $\gamma$ to $\sqrt{1 - |\lambda|^2}$.

\subsection{S4}

We consider a slightly modified version of S4 here:
\begin{align}
    \Delta &= \mathrm{softplus}(\omega_\Delta)\\
    h_{t+1} &= \exp \left ( (\omega_A^\mathrm{re} + i \omega_A^\mathrm{im}) \Delta \right ) \odot h_t + \Delta \odot B x_{t+1}\\
    y_t &= \mathrm{Re}[Ch_t + D x_t]
\end{align}
with $x_t \in \mathbb{R}^{d_\mathrm{in}}$, $h_t \in \mathbb{C}^{n}$, $y_t \in \mathbb{R}^{d_\mathrm{out}}$, $\omega_A^\mathrm{re}+i \omega_A^\mathrm{im}\in \mathbb{C}^{n}$, $B \in \mathbb{C}^{n \times d_\mathrm{in}}$, $\omega_\Delta \in \mathbb{R}^{n}$, $C \in \mathbb{C}^{d_\mathrm{out} \times n}$ and $D \in \mathbb{R}^{d_\mathrm{out} \times d_\mathrm{in}}$. 
The main difference with the standard architecture is that we do not consider any state expansion so that there is no parameter sharing. The makes this architecture closer to the linear RNN architecture we mostly focus on in this paper, while keeping the kind of parametrization it uses. In the same spirit, we do not couple the input and the readout matrices in any way.

The initialization we use also differs from existing ones as we initialize $\Delta$ to 1 and $\exp \left ( \Delta (\omega_A^\mathrm{re} + i \omega_A^\mathrm{im} ) \right )$ in the same way as $\lambda$ in the LRU.

\subsection{GRU}
\label{subsec:app_GRU}

The GRU version we use is the following:
\begin{align}
      r_{t+1} &= \sigma(W_{rx} x_{t+1} + W_{rh} h_{t} + b_{r}) \\
      f_{t+1} &= \sigma(W_{fx} x + W_{fh} h_{t} + b_{f}) \\
      n_{t+1} &= \tanh(W_{nx} x_{t+1} + b_{nx} + r \odot (W_{nh} h_{t} + b_{nh})) \\
      h_{t+1} &= f_{t+1} \odot h_t + (1 - f_{t+1}) \odot n_{t+1} \\
\end{align}
with $\sigma$ the sigmoid function, $h_t \in \mathbb{R}^n$, $x_t \in \mathbb{R}^{d_\mathrm{in}}$ and all the other matrices appropriately sized. We initialize the parameters with the Chrono initialization \cite{tallec_can_2018}: all parameters are initialized using standard practice (orthogonal initialization for the weights taking $h$ as input, Lecun initialization for the rest, biases initialized at 0) except for $b_f$, which is initialized as
\begin{align}
    b_f &\sim \log \left ( \mathcal{U}\left(T_{\mathrm{min}}, T_\mathrm{max} \right) \right ).
\end{align}
Here, $T_{\mathrm{min}}$ and $T_{\mathrm{max}}$ denote the minimum and maximal characteristic time scale. In the experiments of Section~\ref{sec:realistic} we take
\begin{align}
    T_{\mathrm{min}} = \frac{1}{1-\nu_0}\\
    T_{\mathrm{max}} = \frac{2}{1-\nu_0}
\end{align}
to enable the comparison with other architectures. Indeed, when ignoring the dependence of the forget gate $f$ on the input $x$ and on the hidden state $h$, this corresponds to having $f \in \left [ \nu_0, \frac{1 + \nu_0}{2} \right ]$.

\newpage

\section{Theory}

This section introduces all the theoretical results we directly or indirectly mention in the main text, as well as provides a proof for them.

\subsection{Useful lemmas}

Most, if not all the calculations, that we will be doing in this section involves infinite sums. We state and prove two useful lemmas to simplify later calculations.

\begin{lemma}
    \label{lem:useful1}
    For $\alpha, \beta \in \mathbb{C}$ satisfying $|\alpha| < 1$ and $|\beta| < 1$, and $(u_n)_{n\in \mathbb{Z}}$ a bounded sequence satisfying $u_{-n} = u_n$, we have
    \begin{equation}
        \sum_{n, m \geq 0} \alpha^n \beta^n u_{n - m} = \frac{1}{1 - \alpha \beta} \left ( u_0 + \sum_{\Delta \geq 1} (\alpha^\Delta  + \beta^\Delta) u_\Delta \right )
    \end{equation}
\end{lemma}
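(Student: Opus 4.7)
The plan is to reduce the double sum to a single sum indexed by the difference $\Delta = n - m$ and then exploit the symmetry assumption $u_{-n} = u_n$ to fold the two halves ($\Delta \geq 0$ and $\Delta < 0$) together. Since $u$ is bounded, say $|u_n| \leq M$, and $|\alpha|, |\beta| < 1$, the double sum is absolutely convergent because $\sum_{n,m \geq 0} |\alpha|^n |\beta|^m |u_{n-m}| \leq M/((1-|\alpha|)(1-|\beta|))$. This justifies using Fubini/Tonelli to reorder terms freely in what follows.

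First I would split the domain of summation into $\{n \geq m\}$ and $\{n < m\}$ and perform the change of variables $\Delta = n - m \geq 0$ in the first piece and $\Delta = m - n \geq 1$ in the second. On $\{n \geq m\}$, writing $n = m + \Delta$, the inner sum over $m$ becomes a geometric series:
\begin{equation}
\sum_{\Delta \geq 0} \sum_{m \geq 0} \alpha^{m+\Delta} \beta^m\, u_\Delta = \sum_{\Delta \geq 0} \frac{\alpha^\Delta}{1 - \alpha\beta}\, u_\Delta.
\end{equation}
Symmetrically, on $\{n < m\}$, writing $m = n + \Delta$ with $\Delta \geq 1$, the inner sum over $n$ yields $\beta^\Delta/(1 - \alpha\beta)$ and the $u$-factor is $u_{-\Delta}$, which by the assumed symmetry equals $u_\Delta$.

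Combining the two pieces and factoring out $1/(1-\alpha\beta)$ gives
\begin{equation}
\sum_{n,m \geq 0} \alpha^n \beta^m u_{n-m} = \frac{1}{1-\alpha\beta}\left( \sum_{\Delta \geq 0} \alpha^\Delta u_\Delta + \sum_{\Delta \geq 1} \beta^\Delta u_\Delta \right),
\end{equation}
and isolating the $\Delta = 0$ term produces the claimed formula. There is no real obstacle here beyond keeping the index ranges straight when splitting the diagonal $n = m$; it should be included in exactly one of the two halves (I put it in $\{n \geq m\}$, hence the $\Delta \geq 0$ versus $\Delta \geq 1$ asymmetry in the two sums), which is precisely what yields the single $u_0$ term rather than a doubled one. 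The boundedness of $u$ and the strict contractivity $|\alpha|, |\beta| < 1$ make every manipulation rigorous via absolute convergence.
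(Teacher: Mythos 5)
Your proof is correct and follows essentially the same route as the paper's: split the double sum according to the sign of $n-m$, reindex by $\Delta$, use the symmetry $u_{-\Delta}=u_\Delta$, and sum the geometric series $\sum_n(\alpha\beta)^n$ (the paper uses a three-way split with the diagonal $n=m$ separate, while you absorb it into $\{n\geq m\}$, which is an immaterial difference). Your explicit absolute-convergence justification is a welcome addition the paper omits.
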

\begin{proof}
    The proof naturally comes from separating the indices $n$ and $m$ in three sets: one in which the two are equals, one in which $n$ is larger and one in which $m$ is larger. This gives
    \begin{align}
        \sum_{n, m \geq 0} \alpha^{n}\beta^{m} u_{n-m} &= \sum_{n = m} \alpha^{n}\beta^{m} u_{n-m} + \sum_{n > m} \alpha^{n}\beta^{m} u_{n-m} + \sum_{n < m} \alpha^{n}\beta^{m} u_{n-m}\\
        &= \sum_{n} \alpha^{n}\beta^{n} u_{0} + \sum_{m} \alpha^{m}\beta^{m} \sum_{\Delta \geq 1} \alpha^\Delta u_{\Delta} + \sum_{n} \alpha^{n}\beta^{n} \sum_{\Delta \geq 1} \beta^\Delta u_{-\Delta}\\
        &= \sum_{n} \alpha^{n}\beta^{n} \left ( u_0 + \sum_{\Delta \geq 1} (\alpha^\Delta + \beta^\Delta) u_\Delta \right )\\
        &= \frac{1}{1 - \alpha \beta} \left ( u_0 + \sum_{\Delta \geq 1} (\alpha^\Delta + \beta^\Delta) u_\Delta \right )
    \end{align}
\end{proof}

\begin{lemma}
    \label{lem:useful2}
    In the same conditions as Lemma~\ref{lem:useful1}, we have
    \begin{equation}
        \sum_{n, m \geq 0} nm \alpha^{n-1} \beta^{m-1} u_{n-m} = \der{}{\alpha}\der{}{\beta} \left [ \frac{1}{1 - \alpha \beta} \left ( u_0 + \sum_{\Delta \geq 1} (\alpha^\Delta  + \beta^\Delta) u_\Delta \right ) \right ]
    \end{equation}
\end{lemma}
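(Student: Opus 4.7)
The plan is to obtain Lemma~\ref{lem:useful2} as a direct consequence of Lemma~\ref{lem:useful1} by differentiating both sides with respect to $\alpha$ and $\beta$. Formally, differentiating the left-hand side of Lemma~\ref{lem:useful1} with respect to $\alpha$ pulls down a factor $n$ and reduces the exponent to $n-1$, and then differentiating with respect to $\beta$ pulls down $m$ and reduces its exponent to $m-1$, which is exactly the summand appearing in Lemma~\ref{lem:useful2}. The right-hand side is then obtained verbatim by applying $\partial_\alpha \partial_\beta$ to the closed form from Lemma~\ref{lem:useful1}, so no additional algebra is needed once the interchange is justified.

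Concretely, I would proceed as follows. First, fix any compact set $K \subset \{(\alpha,\beta) \in \mathbb{C}^2 : |\alpha|<1, |\beta|<1\}$, and choose $r<1$ with $|\alpha|,|\beta| \leq r$ on $K$. Since $(u_n)$ is bounded by some constant $M$, each term of the double sum in Lemma~\ref{lem:useful1} is dominated in modulus by $M r^{n+m}$, and differentiated versions (once or twice) are dominated by $M n m r^{n+m-2}$. Both dominating series converge, so the double series defines a holomorphic function of $(\alpha,\beta)$ on the open bidisk, and its partial derivatives can be computed termwise (by Weierstrass/ dominated convergence for power series). Second, I would apply this termwise differentiation to the identity
\begin{equation}
    \sum_{n, m \geq 0} \alpha^{n} \beta^{m} u_{n - m} = \frac{1}{1 - \alpha \beta} \left ( u_0 + \sum_{\Delta \geq 1} (\alpha^\Delta + \beta^\Delta) u_\Delta \right )
\end{equation}
from Lemma~\ref{lem:useful1}, obtaining on the left
\begin{equation}
    \partial_\alpha \partial_\beta \sum_{n, m \geq 0} \alpha^{n} \beta^{m} u_{n - m} = \sum_{n, m \geq 1} n m \, \alpha^{n-1} \beta^{m-1} u_{n-m},
\end{equation}
which coincides with the sum in the statement since the terms with $n=0$ or $m=0$ vanish after the differentiations. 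The right-hand side is simply $\partial_\alpha \partial_\beta$ applied to the closed-form expression, which is exactly what appears in the lemma.

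The only step requiring any care is the justification of the term-by-term differentiation; this is the main (mild) obstacle, and it is handled by the uniform convergence argument above using the boundedness of $(u_n)$ and the bound $|\alpha|,|\beta|\leq r<1$ on compacts. Once that is in place, the identity follows immediately from Lemma~\ref{lem:useful1} with no further computation.
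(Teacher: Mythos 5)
Your proposal is correct and follows essentially the same route as the paper: differentiate the identity of Lemma~\ref{lem:useful1} term by term in $\alpha$ and $\beta$ and read off the result. The only difference is that you explicitly justify the interchange of differentiation and summation via uniform convergence on compact subsets of the bidisk, a step the paper's proof performs without comment.
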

\begin{proof}
    This follows from remarking that
    \begin{align}
        \der{}{\alpha} \der{}{\beta} \left [ \sum_{n, m \geq 0} \alpha^{n} \beta^{m} u_{n-m} \right ] &= \der{}{\alpha} \left [ \sum_{n, m \geq 0} m \alpha^{n} \beta^{m-1} u_{n-m} \right ] \\
        &= \sum_{n, m \geq 0} nm \alpha^{n-1} \beta^{m-1} u_{n-m}
    \end{align}
    and using Lemma~\ref{lem:useful1} to get the final result.
\end{proof}

\subsection{The curse of memory: signal propagation analysis}
\label{app_subsec:com_derivation}

We recall the assumptions that we stated in Section~\ref{subsec:sig_prop_diag}:
\begin{itemize}
    \item[a)] \textbf{Linear diagonal recurrent neural networks}. We restrict ourselves to networks satisfying $h_{t+1} = \lambda \odot h_t + x_{t+1}$ with $\lambda$, $h_t$ and $x_t$ complex numbers. Without loss of generality, we focus on the one dimensional setting. We additionally consider $\lambda$s with absolute values smaller than 1.
    \item[b)] \textbf{Infinite time horizon}. We consider infinite sequences and initialize the network dynamics at $t_0 = -\infty$.
    \item[c)] \textbf{Wide-sense stationarity}. We assume the different quantities that the network receives, which includes the inputs $x_t$, to be wise-sense stationary (WSS). A random process $X_t$ is said to be WSS if its auto-correlation function is independent of time, that is, for all $t \in \mathbb{R}$ and $\Delta \in \mathbb{R}$,
    $\mathbb{E} \left [ X_{t+\Delta} \bar X_t \right ] := R_X(\Delta)$.
\end{itemize}

\subsubsection{Forward pass}

Without loss of generality, we can take $t=0$ given the wide-sense stationarity and infinite time horizon assumptions. Let us first remark that we have 
\begin{equation}
    h_0 = \sum_{n \geq 0} \lambda^{n} x_{-n}
\end{equation}
so that
\begin{align}
    \mathbb{E}[|h_0|^2] &= \sum_{n, m \geq 0} \lambda^{n}\bar\lambda^{m} \mathbb{E}\left [x_{-n} \bar x_{-m} \right ]\\
    &= \sum_{n, m \geq 0} \lambda^{n}\bar\lambda^{m} R_x(n-m)\\
    &= \label{eqn:detail_exp_h} \frac{1}{1-|\lambda|^2} \left (R_x(0) + \sum_{\Delta \geq 1} (\bar\lambda^{\Delta} + \lambda^{\Delta}) R_x(\Delta) \right )\!.
\end{align}
We used Lemma~\ref{lem:useful1} to obtain the last equality. In Section~\ref{subsec:sig_prop_diag}, we focused on the real case $\bar\lambda = \lambda$, so this formula becomes Equation~\ref{eqn:expectation_h}. If we further assume that the auto-correlation of $x$ decreases exponentially with decay rate $\rho$, that is $R_x(\Delta) = \rho^{|\Delta|}$, we can further simplify the last expression:
\begin{align}
    \mathbb{E}[|h_0|^2] &= \frac{1}{1-|\lambda|^2} \left (1 + \sum_{\Delta \geq 1} (\bar\lambda^{\Delta} + \lambda^{\Delta}) \rho^\Delta \right )\\
    &= \frac{1}{1 - |\lambda|^2}\left ( 1 + \frac{\bar\lambda \rho}{1-\bar\lambda\rho} + \frac{\lambda \rho}{1 - \lambda\rho} \right )\\
    &= \frac{1 - \rho^2|\lambda|^2}{|1-\rho\lambda|^2(1-|\lambda|^2)}
\end{align}
It follows that if the inputs are i.i.d. ($\rho = 0$), we have $\E[|h_0|^2] = (1-|\lambda|^2)^{-1}$, and if the inputs are constant equal to $1$ ($\rho = 1$), we have $\E[|h_0|^2] = |1-\lambda|^{-2}$.

\subsubsection{Backward pass}
\label{app:subsec_backward}

Differentiating the update $h_{t+1} = \lambda h_t + x_{t+1}$ with respect to $\lambda$ gives
\begin{equation}
    \der{h_{t+1}}{\lambda} = \lambda \der{h_{t}}{\lambda} + h_{t}
\end{equation}
so that
\begin{align}
    \der{h_0}{\lambda} &= \sum_{n \geq 0} \lambda^{n} h_{-n-1}\\
    &= \sum_{n \geq 0} \lambda^n \sum_{m \geq 0} \lambda^m x_{-n-m-1}\\
    &= \sum_{n, m \geq 0} \lambda^{n+m} x_{-(n+m+1)}\\
    &= \sum_{n \geq 0} n \lambda^{n-1} x_{-n-1}.
\end{align}
Note that some extra technicalities are needed to justify these equations as $\lambda$ and $h_t$ are complex valued: these formulas hold as they would in the real-valued case as $h_t$ is an holomorphic function of $\lambda$.

We can now compute the variance of the sensitivity of the hidden state with respect to the parameters.
\begin{equation}
    \E \left [ \left  | \der{h_t}{\lambda} \right | ^2 \right ] = \sum_{n\geq 0} \sum_{m\geq 0} n m \lambda^{n-1} \bar\lambda^{m-1} R_x(n-m).
\end{equation}
Using Lemma~\ref{lem:useful2} gives
\begin{equation}
    \E \left [ \left  | \der{h_t}{\lambda} \right | ^2 \right ] = \der{}{\alpha}\der{}{\beta} \left [ \frac{1}{1 - \alpha \beta} \left ( R_x(0) + \sum_{\Delta \geq 1} (\alpha^\Delta  + \beta^\Delta) R_x(\Delta) \right ) \right ]_{\alpha = \lambda, \beta = \bar\lambda}\!.
\end{equation}
Differentiating this quantity as a product gives
\begin{multline}
    \E \left [ \left  | \der{h_t}{\lambda} \right | ^2 \right ] = \left [ \frac{1 + \alpha \beta}{(1 - \alpha \beta)^3} \left ( R_x(0) + \sum_{\Delta \geq 1} (\alpha^\Delta  + \beta^\Delta) R_x(\Delta) \right ) + 0 \right .\\ \left . + \frac{\alpha}{(1 - \alpha \beta)^2} \left ( \sum_{\Delta \geq 1} \Delta \alpha^{\Delta-1} R_x(\Delta) \right ) + \frac{\beta}{(1 - \alpha \beta)^2} \left ( \sum_{\Delta \geq 1} \Delta \beta^{\Delta-1} R_x(\Delta) \right ) \right]_{\alpha = \lambda, \beta = \bar\lambda}\!,
\end{multline}
which then simplifies as
\begin{multline}
    \E \left [ \left  | \der{h_t}{\lambda} \right | ^2 \right ] = \frac{1 + |\lambda|^2}{(1 - |\lambda|)^3} \left ( R_x(0) + \sum_{\Delta \geq 1} (\lambda^\Delta  + \bar\lambda^\Delta) R_x(\Delta) \right ) \\ + \frac{1}{(1 - |\lambda|^2)^2} \left ( \sum_{\Delta \geq 1} \Delta (\lambda^{\Delta} + \bar\lambda^\Delta) R_x(\Delta) \right )\!.
\end{multline}
Note that Equation~\ref{eqn:expectation_dh} in the main text is the real-valued version of that formula.

Let us now further simplify this equation when $R_x(\Delta) = \rho^{|\Delta|}$. If we use this in the differentiated quantity before differentiating it, we get
\begin{equation}
     \E \left [ \left  | \der{h_t}{\lambda} \right | ^2 \right ] = \der{}{\alpha} \der{}{\beta} \left [ \frac{1}{1-\alpha\beta}\left ( \frac{1 - \rho^2 \alpha \beta}{(1-\rho \alpha)(1 - \rho \beta)} \right ) \right ]_{\alpha = \lambda, \beta = \bar \lambda}\!.
\end{equation}
Calculating this quantity manually is painful. Instead, we use the following trick. Its denominator is rather easy to compute, it is equal to $(1 - \alpha \beta)^3 (1- \rho \alpha)^2 (1 - \rho \beta)^2$. We thus multiply it to the derivative of the function we want to compute in order to obtain a polynomial with unknown factors, and use polynomial regression tools to derive the resulting coefficients. Massaging the obtained expression to make it easier to compute the closed-form value of this quantity when $\rho = 0$ and $\rho = 1$, we get
\begin{equation}
    \E \left [ \left  | \der{h_t}{\lambda} \right | ^2 \right ] = \frac{(1-\rho)(1+|\lambda|^2)+ \rho^2 (1 - |\lambda|^2)^3 + \rho(1-\rho)|\lambda|^2(\rho|\lambda|^2(1 + |\lambda|^2) - 2\lambda - 2\bar\lambda)}{(1 - |\lambda|^2)^3|1-\rho\lambda|^4}.
\end{equation}
This is the quantity we plot on Figure~\ref{fig:figure1}.A, when $\lambda$ is real-valued. When $\rho = 0$, this quantity becomes 
\begin{equation}
    \E \left [ \left  | \der{h_t}{\lambda} \right | ^2 \right ] = \frac{1+|\lambda|^2}{(1 - |\lambda|^2)^3},
\end{equation}
and it is equal to 
\begin{equation}
    \E \left [ \left  | \der{h_t}{\lambda} \right | ^2 \right ] = \frac{1}{|1 - \lambda|^4},
\end{equation}
when $\rho = 1$. Additionally, it will diverge whenever $|\lambda| \rightarrow 1$ when $\rho < 1$, and when $\lambda \rightarrow 1$ when $\rho = 1$.

Regarding the backpropagation of errors to the inputs, the analysis we did in the main text also holds for complex number given that $h_t$ is an holomorphic function of $x_t$ and it thus behaves as the forward pass once replacing the input distribution with the one of output errors $\partial_{h_t} L_t$.

\subsubsection{Extension to fully-connected networks}
\label{subsubsec:app_com_fully_connected}

We now turn to the non-diagonal case. For the sake of simplicity, we assume that recurrent matrix is complex diagonalizable and that its eigenvalues are all different. This will enable us to differentiate the eigenvalues and the eigenvectors. We consider dynamics of the form
\begin{align}
h_{t+1} &= Ah_t + x_{t+1}
\end{align}
As $A$ is complex diagonalizable, there exists a complex-valued matrix $P$ and a complex-valued vector $\lambda$ such that
\begin{align}
A &= P \mathrm{diag}(\lambda)P^{-1}\\
P_{:i}^\dagger P_{:i}&= 1 ~~ \forall i.
\end{align}
The linear recurrent neural network considered above is equivalent to its diagonal version
\begin{align}
h_{t+1}^\mathrm{diag} &= \lambda h_t^\mathrm{diag} + P^{-1} x_{t+1} \\
h_t &= P h^\mathrm{diag}_t.
\end{align}

We now differentiate $h_t$ w.r.t. to $A$ using the diagonal parametrization and obtain
\begin{equation}
\label{eqn:der_h_A}
\frac{\mathrm{d} h_t}{\mathrm{d} A} = \frac{\partial h_t}{\partial P}\frac{\partial P}{\partial A} + \frac{\partial h_t}{\partial h^{\mathrm{diag}}_t} \frac{\mathrm{d} h_t^\mathrm{diag}}{\mathrm{d}\lambda}\frac{\partial \lambda}{\partial A} + \frac{\partial h_t}{\partial h^{\mathrm{diag}}_t} \frac{\mathrm{d} h_t^\mathrm{diag}}{\mathrm{d}P^{-1}}\frac{\partial P^{-1}}{\partial A}.
\end{equation}

\paragraph{$\mathrm{d}_A P$, $\mathrm{d}_A P^{-1}$ and $\mathrm{d}_A \lambda$ can be considered constant.} Intuitively, the eigenvalues and eigenvectors move smoothly as we restricted ourselves to the case in which eigenvalues are singular. If this is not the case, math becomes trickier as the eigenvectors are not uniquely defined. We can study the behavior of those quantities in more detail, following \citet{boeddeker_computation_2017}:
\begin{align}
\frac{\partial \lambda}{\partial A_{ij}} &= \mathrm{diag}\left ( P^{-1} \frac{\partial A}{\partial A_{ij}} P \right )\\
\frac{\mathrm{d}P}{\mathrm{d}A_{ij}}&= P \left ( F \odot \left ( P^{-1} \frac{\partial A}{\partial A_{ij}} P \right ) \right )
\end{align}
The $F$ introduced in the last equation is equal to
\begin{equation}
F_{ij} := \left \{
\begin{array}{ll}
\frac{1}{\lambda_j-\lambda_i} & \mathrm{if} ~ i \neq j\\
0 & \mathrm{otherwise}.
\end{array}
\right .
\end{equation}

Importantly, those two quantities do not grow to infinity as the absolute value of the eigenvalues goes to 1, which means that we can consider those derivatives to be independent of $|\lambda|$ for the sake of our analysis. Note that the previous argument assumes that eigenvalues do not collapse.

\paragraph{$\mathrm{d}_\lambda h_t^\mathrm{diag}$ is the dominating term in $\mathrm{d}_A h_t$.} We wonder which of the three different terms that appear in $\mathrm{d}_A h_t$ (Equation~\ref{eqn:der_h_A}) will be the dominating one as $|\lambda|$ (or $\lambda$) goes to 1. In the previous paragraph, we have shown that the derivative of $P^{-1}$, $P$ and $\lambda$ can be considered constant for the sake of our analysis. We thus focus on the other terms.

First, we have
\begin{equation}
\frac{\partial h_{t, l}}{\partial P_{ij}} =  h_{t,i}^\mathrm{diag} \mathbbm{1}_{j = l}
\end{equation}
so the magnitude of this quantity is roughly the one of $h_t^\mathrm{diag}$, which corresponds to the low pass filtering of the inputs with different $\lambda$ values.

Second, we know that $\partial_{h_t^\mathrm{diag}} h_t$ does not change in magnitude as $\lambda$ changes, as $P$ remains bounded. So, for the third term of the sum, we are left to study the behavior of $\mathrm{d}_{P^{-1}} h_t^\mathrm{diag}$. We can show that it evolves according to
\begin{align}
\frac{\mathrm{d} h_{t+1, k}^\mathrm{diag}}{\mathrm{d} P^{-1}_{ij}} &= \lambda_i \frac{\mathrm{d} h_{t+1, k}^\mathrm{diag}}{\mathrm{d} P^{-1}_{ij}} + x_{t+1,j} ~~\mathrm{if}~k = i\\
\frac{\mathrm{d} h_{t+1, k}^\mathrm{diag}}{\mathrm{d} P^{-1}_{ij}} &= 0 ~~\mathrm{otherwise}.
\end{align}
It follows that the third term in the sum also corresponds to a low pass filtering of the inputs.

Finally, we know that the second term, the one in $\mathrm{d}_\lambda h_t^\mathrm{diag}$ will grow faster to infinity as it corresponds to two consecutive low pass filters with the same $\lambda$ values (c.f. calculation above). It will thus be the dominating term in the infinite memory limit.

\subsubsection{On the wide-sense stationarity assumption}
\label{app:discussion_wss}

In our analysis, we make the assumption that the different quantities given to the network are wide-sense stationary, that is their statistics are invariant to a temporal shift. In practice, this assumption will likely never be satisfied, as sequences are finite and as parts of the sequence (e.g., the beginning of a text) can have different statistics than other parts (e.g., the end of a text).

It should be noted that the analysis we have done in the wide-sense stationary case can provide an upper bound on the different quantity we study. Indeed, if there exists a function $U$ such that
\begin{equation}
    \left | \mathbb{E}[x_n \bar{x}_m] \right | \leq U(|n-m|),
\end{equation}
minor modifications to our analysis enable to upper bound the different quantities we study, replacing $R_x$ by $U$. 

In our experiments of Section~\ref{sec:realistic}, we plug the empirical covariance defined as
\begin{equation}
    R_x^\mathrm{empirical}(\Delta) := \mathbb{E}_x \left [ \frac{1}{T - |\Delta| + 1} \sum_{t = 0}^{T - |\Delta|} x_t x_{t + |\Delta|} \right ]
\end{equation}
into our analytical expressions. It comes with an approximation as it averages the autocorrelation for all positions, which are not equal when wide-sense stationarity is not met.

\subsection{Impact of input normalization and parametrization}

In this section, we consider a diagonal linear recurrent neural network of the form
\begin{equation}
    h_{t+1} = \lambda(\omega) h_t + \gamma(\lambda) x_{t+1}
\end{equation}
with $\gamma(\lambda)$ the input normalization factor and $\lambda$ parametrized by a vector $\omega$. Next, we study the effect of input normalization and reparametrization, first in the real-valued setting and then in the complex-valued one.

\subsubsection{Real case}

Let us start with the forward pass: as
\begin{equation}
    h_t = \sum_{n \geq 0} \lambda^n \gamma(\lambda) x_{t-n},
\end{equation}
$\gamma$ rescales the value the hidden state takes. To avoid any explosion behavior, we thus ideally want $\gamma$ to be the inverse of value of $\E[(h_t)^2]$ without normalization, which we have computed in Equation~\ref{eqn:detail_exp_h}. The same behavior holds for the backpropagation of errors to the inputs as
\begin{equation}
    \der{L}{x_t} = \gamma(\lambda) \left ( \lambda \der{L}{x_{t+1}} + \pder{L}{h_t}\right )\!.
\end{equation}

We now move to the impact of the parametrization. To simplify the calculation, we will ignore the dependency of $\gamma$ on $\lambda$ when differentiating it. This can easily be done in automatic differentiation software by removing this dependency from the computational graph with $\gamma(\mathrm{stop\_gradient}(\lambda))$. We then have
\begin{equation}
    \der{h_t}{\omega} = \der{h_t}{\lambda} \der{\lambda}{\omega}
\end{equation}
and $\sder{h_t}{\lambda}$ which is rescaled by $\gamma$ compared to the calculation we did above. As a consequence, both the input normalization and the parametrization can help to mitigate the curse of memory. 

\subsubsection{On the difficulty of parametrizing complex numbers}
\label{subsubsec:app_difficulty_complex}

We now extend the previous analysis to the complex case, and take a polar parametrization of $\lambda$: $\lambda(\omega) = \nu(\omega) \exp(i \theta(\omega))$. The effect of the input normalization does not change when moving to complex numbers. The role of the reparametrization is however a bit more subtle. As $h_t$ is an holomorphic function of $\lambda$, we have $\sder{{h_t}}{{\bar \lambda}} = 0$ and
\begin{equation}
    \der{h_t}{\omega} = \der{h_t}{\lambda}\der{\lambda}{\omega} = \der{h_t}{\lambda} \left ( \frac{1}{2} \der{\nu}{\omega}\exp(i\theta) + \frac{i}{2} \nu \der{\theta}{\omega} \exp(i\theta) \right ).
\end{equation}
It follows that
\begin{align}
    \E\left[\left| \der{h_t}{\omega}\right|^2\right] &=\frac{1}{4} \E\left[\left| \der{h_t}{\lambda}\right|^2\right] \left |\der{\nu}{\omega}\exp(i\theta) + i \nu \der{\theta}{\omega} \exp(i\theta) \right |^2\\
    &= \frac{1}{4}\E\left[\left| \der{h_t}{\lambda}\right|^2\right] \left |\der{\nu}{\omega} + i \nu \der{\theta}{\omega} \right |^2\\
    &= \frac{1}{4}\E\left[\left| \der{h_t}{\lambda}\right|^2\right] \left (\der{\nu}{\omega}^2 + \nu^2 \der{\theta}{\omega}^2 \right ).
\end{align}
To simplify the analysis, we will further assume that $\E[|\sder{h_t}{\lambda}|^2]$ is only a function of $\nu$. This asumption holds in the case of $\rho=0$ and $\rho=1$, c.f. Section \ref{app:subsec_backward}, but not necessarily otherwise. To ensure that that this quantity does not depend on $\lambda$, we thus want $\sder{\nu}{\omega}^2 E(\nu) = \Theta(1)$ and $\nu^2 \sder{\theta}{\omega}^2 E(\nu) = \Theta(1)$. The second means that the angle parametrization must depend on the value $\nu$ takes. Let us take the $\rho = 0$ example to get an idea of what the ideal parametrization should be. First, we have $\gamma(\lambda) = \sqrt{1 - \nu^2}$ so that
\begin{equation}
    E(\nu) = \gamma(\lambda)^2 \frac{1 + \nu^2}{(1 - \nu^2)^3} = \frac{1 + \nu^2}{(1 - \nu^2)^2}.
\end{equation}
We are left with the differential equation $\nu' = \Theta(1 - \nu^2)$, which is for example solved with $\nu = \tanh(\omega_\nu)$. Now let us look at the parametrization of $\theta$. If we ignore the $\nu^2$ term for simplicity, the approximate differential equation it needs to solve is $\sder{\theta}{\omega} = \Theta(1 - \nu^2)$, which can be solved by $\theta = \mathrm{stop\_gradient}(1 - \nu^2) \omega_\theta$. The exact detail of this calculation do not really matter as this is heavily input distribution dependent. However, the interesting part here is that the angle parameter must be rescaled by a function of $\nu$. This makes intuitive sense when looking looking at the sharpness of the loss around optimality in Figure~\ref{fig:figure1}.C, but this also makes the loss even flatter further away from optimality. We will come back to this point in Section~\ref{subsubsec:app:example_learning_angle}, showing that in practice, such a parametrization complicates the learning of the $\theta$. Learning complex numbers is thus difficulty, because of the angle.

\newpage

\section{Linear teacher-student task}

This section is dedicated to detail the theoretical results behind our analysis of the teacher-student task, present all the details necessary to reproduce our empirical experiments, and provide additional analysis.

\subsection{1D setting}

\subsubsection{Calculation of the loss}

In this toy example, we are interested in learning a simple 1-dimensional linear recurrent neural network which follows the dynamics
\begin{equation}
    h_{t+1} = \lambda h_t + x_{t+1}
\end{equation}
to reproduce the hidden state $h_t^*$ of a teacher with recurrent parameter $\lambda^*$. Note that we here allow all variables to be complex-valued. We take the loss to be
\begin{equation}
    L(\lambda, \lambda^*) := \frac{1}{2T} \sum_{t = 1}^T  \E_x \left [ \left | h_t - h_t^* \right |^2 \right ]
\end{equation}
We assume $x$ to be drawn from a wide-sense stationary distribution so that we can focus on studying the behavior of one $L_t(\lambda, \lambda^*) := \frac{1}{2} \E_x \left [ \left | h_t - h_t^* \right |^2 \right ]$ to understand the behavior of the full loss $L$, in the limit of infinitely long sequences ($T \rightarrow \infty$). Moreover, to further simplify the calculations, we assume that $x$ is real-valued and that $R_x(\Delta) = \rho^{|\Delta|}$.

Let us now proceed with the calculation:
\begin{equation}
    L_t(\lambda, \lambda^*) := \frac{1}{2} \mathbb{E}_x \left [ h_t\bar{h_t} + h_t^* \bar{h_t^*} - h_t \bar{h_t^*} - \bar{h_t} h_t^* \right ]\!.
\end{equation}
We have shown in Section~\ref{app_subsec:com_derivation} that in the limit of $t \rightarrow \infty$,
\begin{align}
    \mathbb{E}_x \left [ h_t \bar{h_t} \right ] &= \frac{1}{1-\lambda \bar\lambda} \left ( 1 + \frac{\rho \lambda}{1 - \rho\lambda} + \frac{\rho \bar\lambda}{1 - \rho\bar\lambda} \right )\\
\end{align}
Similar derivations hold for the other three terms in the loss. Grouping them gives the exact value of the loss. We omit the formula as it is not particularly insightful. In the case of constant inputs ($\rho = 1$), we have
\begin{equation}
    L_t(\lambda, \lambda^*) = \frac{1}{2} \left | \frac{1}{1-\lambda} - \frac{1}{1 - \lambda^*} \right |^2\!.
\end{equation}
In the case of i.i.d. inputs ($\rho =0$), we have
\begin{equation}
    L_t(\lambda, \lambda^*) = \frac{1}{2} \left ( \frac{1}{1 - |\lambda|^2} + \frac{1}{1 - |\lambda^*|^2} - \mathrm{Re}\left [ \frac{2}{1 - \bar\lambda \lambda^*} \right ] \right ) \!.
\end{equation}
This is the loss we plot on Figure~\ref{fig:figure1}.B and C.

\subsubsection{Optimal normalization and reparametrization with uncorrelated inputs}
\label{subsec:appendix_optimal_1D}

Having a simple closed-form solution for the value the loss takes gives us the possibility to investigate in more detail what an optimal normalization and parametrization should be. We focus on the case $\rho = 0$. 

For $\rho = 0$, the optimal normalization is $\gamma(\lambda) = \sqrt{1 - |\lambda|^2}$. Given that we now add an input normalization to the student, we must also add it to the teacher for the student to be able to fit it. The loss becomes
\begin{align}
    L_t &= \frac{1}{2} \left ( \frac{\gamma(\lambda)}{1 - |\lambda|^2} + \frac{\gamma(\lambda^*)}{1 - |\lambda^*|^2} - \mathrm{Re}\left [ \frac{2\gamma(\lambda)\gamma(\lambda^*)}{1 - \bar\lambda \lambda^*} \right ] \right )\\
    &= 1 - \mathrm{Re}\left [ \frac{\gamma(\lambda)\gamma(\lambda^*)}{1 - \bar\lambda \lambda^*} \right ]\!.
\end{align}

Next, we parametrize $\lambda$ as $\lambda = \nu(\omega_\nu) \exp(i \theta(\omega_\theta))$ and seek to find a parametrization such that, at optimality, $\mathbb{E} [ (\sder{h_t}{{\omega_\nu}})^2 ] = 1$ and $\mathbb{E} [ (\sder{h_t}{{\omega_\theta}})^2 ] = 1$. Given that the student perfectly fit the teacher-generated data at optimality and that the loss we use is the mean-squared error, this corresponds to having $\mathrm{d}^2_{{\omega_\nu}} L_t = 1$ and $\mathrm{d}^2_{{\omega_\theta}} L_t = 1$. 

\paragraph{Deriving the optimal $\nu$ parametrization.} We now compute the Hessian of the loss w.r.t. $\omega_\nu$. First, we can simplify our calculations by restricting ourselves to the case $\theta = \theta^*$. The loss becomes
\begin{equation}
    L_t = 1 - \frac{\gamma(\nu) \gamma(\nu^*)}{1 - \nu \nu^*}.
\end{equation}
Differentiating this function a first time, we obtain
\begin{equation}
    \der{L_t}{\nu} = - \frac{\gamma(\nu^*) \gamma'(\nu)}{1 - \nu \nu^*} - \frac{\gamma(\nu^*) \nu^* \gamma(\nu)}{(1 - \nu \nu^*)^2}.
\end{equation}
Differentiating it a second time gives
\begin{equation}
    \der{^2 L_t}{\nu^2} = - \frac{\gamma(\nu^*) \gamma''(\nu)}{1 - \nu \nu^*} - \frac{2\gamma(\nu^*) \nu^* \gamma'(\nu)}{(1 - \nu \nu^*)^2} - \frac{2 \gamma(\nu^*) {\nu^*}^2 \gamma(\nu)}{(1 - \nu \nu^*)^3}.
\end{equation}
Leveraging the fact that
\begin{equation}
    \gamma'(\nu) = \frac{-\nu}{\gamma(\nu)} ~~\mathrm{and}~~ \gamma''(\nu) = \frac{-\gamma(\nu)^2 - \nu^2}{\gamma(\nu)^3},
\end{equation}
we finally get, when $\nu = \nu^*$,
\begin{equation}
    \der{^2 L_t}{\nu^2} = \frac{1}{(1 - {\nu}^2)^2}.
\end{equation}
Given that we are at optimality, we have
\begin{equation}
    \der{^2 L_t}{\omega_\nu^2} = \mathbb{E} \left [ \der{h_t}{\omega_\nu} \der{^2L_t}{h_t^2} \der{h_t}{\omega_\nu} \right ] = \nu'(\omega_\nu)^2 \mathbb{E} \left [ \der{h_t}{\nu} \der{^2L_t}{h_t^2} \der{h_t}{\nu} \right ] = \nu'(\omega_\nu)^2 \der{^2 L_t}{\nu^2}.
\end{equation}
To keep that quantity constant, we thus have to solve the differential equation
\begin{equation}
    \nu'(\omega_\nu) = (1 - \nu^2),
\end{equation}
which gives $\nu(\omega_\nu) = \tanh(\omega_\nu)$.

\paragraph{Deriving the optimal $\theta$ parametrization.}  We now move to the parametrization of $\theta$. We have 
\begin{equation}
    L_t = 1 - \mathrm{Re} \left [\frac{\gamma(\nu)\gamma(\nu^*)}{1 - \bar\lambda \lambda^*} \right ] = 1 - \frac{\gamma(\nu)\gamma(\nu^*) (1 - \nu \nu^* \cos(\theta-\theta^*))}{|1 - \bar \lambda \lambda^*|^2}.
\end{equation}
For notational convenience, we denote
\begin{equation}
    \alpha(\theta-\theta^*) := |1 - \bar \lambda \lambda^*|^2 = (1 - \nu\nu^* \cos(\theta - \theta^*))^2 + \nu^2{\nu^*}^2 \sin(\theta -\theta^*)^2.
\end{equation}
We have
\begin{equation}
    \der{L_t}{\theta} = \gamma(\nu)\gamma(\nu^*) \left ( -\frac{\nu \nu^* \sin(\theta-\theta^*)}{\alpha(\theta-\theta^*)} + \frac{(1 - \nu \nu^* \cos(\theta-\theta^*)) \alpha'(\theta-\theta^*)}{\alpha(\theta-\theta^*)^2}\right )
\end{equation}
and
\begin{multline}
    \der{^2 L_t}{\theta^2} =\gamma(\nu)\gamma(\nu^*) \left ( -\frac{ \nu \nu^* \cos(\theta-\theta^*)}{\alpha(\theta-\theta^*)}+ 2\frac{\nu\nu^*\sin(\theta-\theta^*) \alpha'(\theta-\theta^*)}{\alpha(\theta-\theta^*)^2}\right . \\ \left . + \frac{(1 - \nu\nu^*\cos(\theta-\theta^*)) \alpha''(\theta-\theta^*)}{\alpha(\theta-\theta^*)^2} -2  \frac{(1 - \nu\nu^*\cos(\theta-\theta^*)) \alpha'(\theta-\theta^*)^2}{\alpha(\theta-\theta^*)^3}  \right )
\end{multline}
At optimality ($\theta = \theta^*$ and $\nu = \nu^*$), we have $\alpha(0) = (1 - \nu^2)^2$, $\alpha'(0)=0$ and $\alpha''(0) = 2\nu^2$, so that
\begin{align}
    \der{^2 L_t}{\theta^2} &= \frac{\nu^2 (1 + \nu^2)}{(1 - \nu^2)^2}.
\end{align}
The optimal parametrization thus has to satisfy
\begin{equation}
    \theta'(\omega_\theta) = \frac{1 - \nu^2}{\nu \sqrt{1 + \nu^2}},
\end{equation}
that is
\begin{equation}
    \theta(\omega_\theta) = \omega_\theta \frac{1 - \nu^2}{\nu \sqrt{1 + \nu^2}}
\end{equation}
There are two things we can remark: 
\begin{itemize}
    \item[--] First, the parametrization that we derived for the general case in Section~\ref{subsubsec:app_difficulty_complex}, which additionally ignored the dependence of $\gamma$ on $\lambda$, is relatively accurate. The only difference is the apparition of the extra $\nu$ term, which becomes insignificant in the long memory limit $\nu \rightarrow 1$. 
    \item[--] Second, the optimal $\theta$ parametrization has to be a function of $\nu$, and thus $\omega_\nu$, so the differential equation $\nu$ needs to satisfy changes. Yet, this considerably simplifies the calculation and there is no simple solution to that problem. One could still argue that the initial choice we made, that is to use a polar parametrization, is the issue. It could be, but most practical models end up using that choice so highlighting the limitations of this choice has important practical consequences. 
\end{itemize}

In the rest of this section, we ignore the dependency of $\theta$ on $\nu$, and consider the optimal parametrization in this setting to be
\begin{align}
    \label{eqn:opt_nu}
    \nu(\omega_\nu^\mathrm{opt}) &= \tanh(\omega_\nu^\mathrm{opt})\\
    \label{eqn:opt_theta}
    \theta(\omega_\theta^\mathrm{opt}) &= \omega_\theta^\mathrm{opt} \frac{1 - \nu^2}{\nu \sqrt{1 + \nu^2}}.
\end{align}

\subsubsection{Visualization of the effect of input normalization and reparametrization}

We now visualize the effect of input normalization and reparametrization on the loss landscape. We focus on two such reparametrizations:
\begin{itemize}
    \item[--] the one used in the LRU \cite{orvieto_resurrecting_2023, de_griffin_2024} with $\gamma(\lambda) = \sqrt{1 - \lambda^2}$, $\nu = \exp(-\exp(\omega_\nu^\mathrm{exp}))$ and  $\theta = \exp(\omega_\theta^\mathrm{exp})$.
    \item[--] the optimal one we derived in the previous Section (c.f. Equations~\ref{eqn:opt_nu} and \ref{eqn:opt_theta}), which is tailored to this specific setting.
\end{itemize}

\begin{figure}[h]
    \centering
    \includegraphics{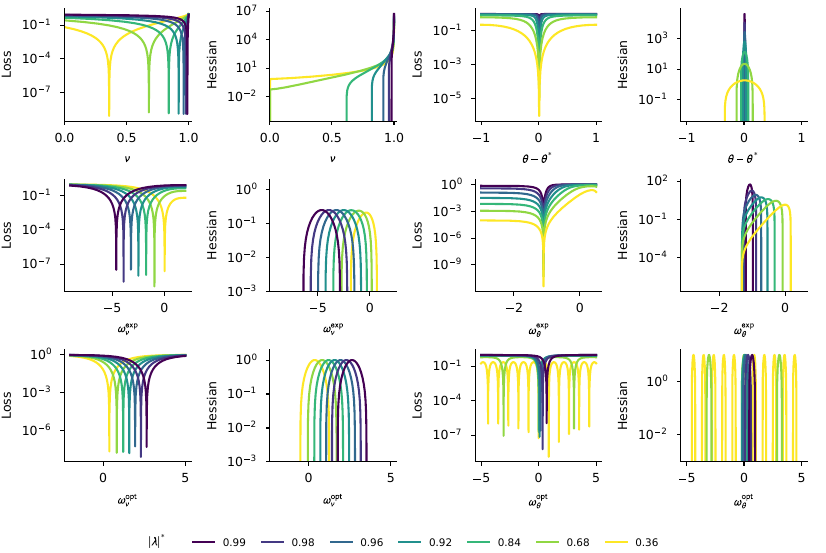}
    \caption{\textbf{Visualization of the loss landscape with input normalization, in the teacher and the student, for different parametrizations}. The teacher satisfies $\lambda^* = |\lambda^*|\exp(i \pi / 100)$, for different $|\lambda^*|$ values. The first two columns correspond to students with correct angle $\theta=\theta^*$ but wrong absolute value $\nu$ and the last two columns to students with correct absolute value $\nu = |\lambda^*|$ but wrong angle. When we fix one variable, we ignore how it affects the loss for the Hessian caclulation. Each line corresponds to a different parametrization: the first line uses a polar parametrization ($\lambda = \nu \exp(i \theta)$), the second line uses the double exponential parametrization used in the LRU (exp) and the third one is the optimal parametrization for that task (tanh). Overall, both reparametrizations enable to control the explosion of the Hessian. However, the size of basins of attraction around optimality, or their number, shrinks as $|\lambda^*|$ goes to 1 for the angle, but not for the absolute value, highlighting how difficult learning the angle can be.}
    \label{fig:1D-reparametrized}
\end{figure}

\subsubsection{Learning the angle is difficult in practice: an example}
\label{subsubsec:app:example_learning_angle}

We use this one-dimensional teacher-student setting to test whether having a parametrization that avoids exploding behaviors at optimality, such as the one we derived in Section~\ref{subsec:appendix_optimal_1D}, facilitates learning. Figure~\ref{fig:1D-reparametrized} already hints towards the fact the basin of attraction of the global minima is either extremely narrow or that their number decreases as longer memories are considered, making learning more tedious. Figure~\ref{fig:example_angle_difficult} confirms it. In this figure, we plot the learning dynamics obtained using the Adam optimizer with a learning rate of $10^{-3}$ for $50$k steps, starting from $\lambda_0 = 0.99 \exp(i \pi / 4)$. We consider three different parametrizations of the angle:
\begin{align}
    \theta(\omega_\theta^\mathrm{polar}) &= \omega_\theta^\mathrm{polar}\\
    \theta(\omega_\theta^\mathrm{exp}) &= \log(\omega_\theta^\mathrm{exp})\\
    \theta(\omega_\theta^\mathrm{opt})  &= \frac{(1 - \nu^2)}{\nu \sqrt{1 + \nu^2}} \omega_\theta.
\end{align}
The first one does not reparametrize the angle, the second one is the one used in the LRU and the third one is the optimal one we derived above. We use $\nu = \tanh(\omega_\nu)$ to parametrize the magnitude in the three cases. We set $\lambda^*$ to $\lambda^* = 0.99 \exp(i \pi / 100)$. The $\theta$ landscape when $\nu$ is correct therefore corresponds to the ones plotted in the last two columns of Figure~\ref{fig:1D-reparametrized}. This example shows that efforts to reduce the sharpness of the loss at optimality, as done in the last parametrization, inevitably make the loss flatter elsewhere and optimization impossible.

\begin{figure}[h]
    \centering
    \includegraphics{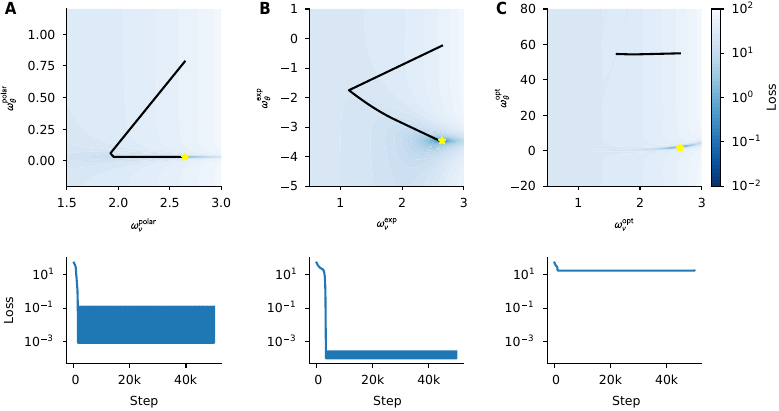}
    \caption{\textbf{Learning the angle is difficult, even in a simple one-dimensional task.} The target $\lambda$ value is equal to $\lambda^* = 0.99 \exp(i \pi / 100)$ and is plotted in yellow. The black lines correspond to the Adam learning dynamics. \textbf{A.} When the angle is not reparametrized ($\theta = \omega_\theta$), the loss landscape is extremely sharp in the $\omega_\theta$ direction, but Adam compensates for it. \textbf{B.} When the angle is parametrized exponentially ($\theta = \exp(\omega_\theta)$), the loss landscape becomes smoother. However, this only hold when the considered angles are small enough, as the exponential parametrization does not bring extra granularity elsewhere. \textbf{C.} When reparametrizing the angle to reduce the gradient explosion as $|\lambda| \rightarrow 1$, the loss becomes extremely tricky to navigate. The parameters are first attracted to a nearby valley, which is flat on the $\omega_\theta$ direction and only indirectly connected to the global minimum. Such a reparametrization thus hinders optimization far away from optimality. See Section~\ref{subsubsec:app:example_learning_angle} for more detail.}
    \label{fig:example_angle_difficult}
\end{figure}

\subsection{Structure of the Hessian at optimality}
\label{subsec:app_structure_hessian}

In Section~\ref{sec:lsi}, we argue that the Hessian at optimality is an important object to understand the learning dynamics in the linear teacher-student task we consider. We here provide some theoretical analysis of its structure in the complex diagonal setting, that is we consider a recurrent network of the form
\begin{align}
    h_{t+1} &= \lambda \odot h_t + b x_t\\
    y_{t} &= \mathrm{Re}[c^\top h_t] + dx_t.
\end{align}
with $\lambda$, $b$ and $c$ complex vectors of size $n$, with $n$ the number of hidden neurons, and $d$ a scalar. We additionally take the loss to be the mean-square error, which is also the one we use in our numerical experiments. Note that, as in our theoretical analysis of Section~\ref{sec:curse_memory}, we consider infinitely long sequences and wide-sense stationary inputs.

Recall that the Hessian of the loss is equal to
\begin{equation}
    \der{^2 L}{\theta^2} = \sum_t \E_x \left [ \der{h_t}{\theta} \pder{^2L_t}{h_t^2} \der{h_t}{\theta}^\top + \pder{L_t}{h_t} \der{^2 h_t}{\theta^2} \right ]\!.
\end{equation}
At optimality, only the first term remains, as $\spder{L_t}{{h_t}}$ is 0 for all data points. Given that we have shown earlier, e.g. in Section~\ref{app_subsec:com_derivation}, that the most sensitive parameters to learn are the recurrent ones $\lambda$, we focus on the Hessian with respect to these parameters in the following.

\subsubsection{Hessian for complex-valued variables}

Before delving into more specific calculations, we make a few remarks on how to deal the Hessian when having complex-valued parameters. We will mostly leverage the fact that the loss $L$ is real-valued.

Before that, we recall a few facts about Wirtinger derivatives:
\begin{itemize}
    \item[--] For $f(z)$ a complex-valued function of $z$, the Wirtinger derivatives are defined as:
    \begin{align}
    \frac{\mathrm{d}f}{\mathrm{d}z} &= \frac{1}{2} \left ( \frac{\mathrm{d}\mathrm{Re}[f]}{\mathrm{d}\mathrm{Re}[z]} - i\frac{\mathrm{d}\mathrm{Im}[f]}{\mathrm{d}\mathrm{Im}[z]} \right )\\
    \frac{\mathrm{d}f}{\mathrm{d}\bar{z}} &= \frac{1}{2} \left ( \frac{\mathrm{d}\mathrm{Re}[f]}{\mathrm{d}\mathrm{Re}[z]} + i\frac{\mathrm{d}\mathrm{Im}[f]}{\mathrm{d}\mathrm{Im}[z]} \right ) \!.
    \end{align}
    \item[--] We have
        \begin{equation}
            \frac{\mathrm{d}f}{\mathrm{d}z} = \overline{\frac{\mathrm{d}f}{\mathrm{d}\bar{z}}}.
        \end{equation}
    \item[--] Leveraging the fact that $L$ is real-valued so that $\bar{L} = L$, we have
        \begin{align}
            \frac{\mathrm{d}^2 L}{\mathrm{d}\lambda^2} &= \frac{\mathrm{d}}{\mathrm{d}\lambda} \left [ \frac{\mathrm{d}L}{\mathrm{d}\lambda}^\top \right ] \\
            &= \frac{\mathrm{d}}{\mathrm{d}\lambda} \left [ \,\overline{\frac{\mathrm{d}L}{\mathrm{d}\bar{\lambda}}^\top} \, \right ]\\
            &= \overline{\frac{\mathrm{d}^2 L}{\mathrm{d}\bar{\lambda}^2} }
        \end{align}
        and, similarly, $\mathrm{d}_\lambda \mathrm{d}_{\bar{\lambda}} L = \overline{\mathrm{d}_{\bar\lambda} \mathrm{d}_{\lambda} L}$. Second derivatives are symmetric, so we additionally have $\mathrm{d}_\lambda \mathrm{d}_{\bar{\lambda}} L = \mathrm{d}_{\bar\lambda} \mathrm{d}_{\lambda} L^\top$, which means that the complex Hessian is a Hermitian matrix.
\end{itemize}

Taken all together, this shows that the full complex Hessian, which contains all cross derivatives, has a similar structure to the real case.

\subsubsection{Hessian with respect to the recurrent eigenvalues}

In this section, we compute the full complex Hessian with respect to the recurrent eigenvalue $\lambda$ and defer the analysis of reparametrization to the next section.

First, let us remark that
\begin{align}
    \frac{\mathrm{d}L_t}{\mathrm{d}\lambda} &= \frac{\partial L_t}{\partial y_t} \, c^\top \, \frac{\mathrm{d}h_t}{\mathrm{d}\lambda}\\
\end{align}
so that
\begin{align}
    \frac{\mathrm{d}^2 L_t}{\mathrm{d}\lambda^2} &= \frac{\mathrm{d}}{\mathrm{d}\lambda}\left [ \frac{\mathrm{d}h_t}{\mathrm{d}\lambda}^\top c \frac{\partial L_t}{\partial y_t}^\top\right ]\\
    &= \frac{\mathrm{d}^2h_t}{\mathrm{d}\lambda^2} c \frac{\partial L_t}{\partial y_t}^\top + \frac{\mathrm{d}h_t}{\mathrm{d}\lambda}^\top c \frac{\partial^2 L_t}{\partial y_t^2} c^\top \frac{\mathrm{d}h_t}{\mathrm{d}\lambda}
\end{align}
We assumed that we are at optimality so that the network perfectly fits the target trajectories and $\partial_{y_t} L_t = 0$. Additionally, $L_t$ is the mean-squared error loss so that $\partial_{y_t}^2 L_t = \mathrm{Id}$. It follows that
\begin{align}
    \left ( \frac{\mathrm{d}^2 L_t}{\mathrm{d} \lambda^2} \right )_{ij} &= \left ( \frac{\mathrm{d}h_t}{\mathrm{d}\lambda}^\top c c^\top \frac{\mathrm{d}h_t} {\mathrm{d}\lambda} \right )_{ij}\\
    &= \left (c^\top \frac{\mathrm{d}h_t}{\mathrm{d}\lambda} \right )_{i} \left (c^\top \frac{\mathrm{d}h_t}{\mathrm{d}\lambda} \right )_{j} \\
    &= c_i \frac{\mathrm{d}h_{t,i}}{\mathrm{d}\lambda_i} c_j \frac{\mathrm{d}h_{t,j}}{\mathrm{d}\lambda_j}.
\end{align}

In the last equation, we made use of the fact that the parameter $\lambda_i$ only affects the hidden state $h_{t,i}$ and not the others, so $\sder{h_{t,i}}{{\lambda_j}} = 0$ if $i \neq j$.

The previous calculation applied to one sequence, we now take the expectation over the data:
\begin{equation}
\frac{\mathrm{d}^2L}{\mathrm{d}\lambda^2} = (cc^\top) \odot \mathbb{E}_{x,y} \left [ \lim_{t \rightarrow \infty} \der{h_t}{\lambda} \der{h_t}{\lambda}^\top \right ]
\end{equation}
Note that we introduced a slight abuse of notation in the previous equation as $\sder{h_t}{\lambda}$ is in general a matrix. However, given that the hidden neurons are independent here due to the diagonal connectivity, it is effectively a vector, and we treat it that way. Let us now compute the expectation, using similar calculation techniques to the one we used in Section~\ref{app_subsec:com_derivation}:
\begin{align}
    \mathbb{E}_{x,y} \left [ \lim_{t \rightarrow \infty} \der{h_{t,i}}{\lambda_i} \der{h_{t,j}}{\lambda_j} \right ]
    &= \mathbb{E} \left [ \sum_{n, m \geq 0} n \lambda_i^{n-1} b_i x_{-n} m \lambda_j^{m-1} b_j x_{-m} \right ] \\
    &= b_i b_j \mathbb{E} \left [ \sum_{n, m \geq 0} n \lambda_i^{n-1} x_{-n} m \lambda_j^{m-1}x_{-m} \right ] \\
    &= b_i b_j \sum_{n, m \geq 0} n m \lambda_i^{n-1} \lambda_j^{m-1} R_x(n-m)
\end{align}
We can now remark that this quantity is very similar to the one we have encountered in Section~\ref{app_subsec:com_derivation}, up to the presence of $b_i b_j$, and can be simplified using Lemma~\ref{lem:useful2}. For conciseness, we note $S(\lambda_i, \lambda_j)$ the right-hand side of the last equation without the $b_i b_j$ factor. Putting this result back in the Hessian, we get
\begin{equation}
    \der{^2 L}{\lambda_i \mathrm{d} \lambda_j} = b_i b_j c_i c_j S(\lambda_i, \lambda_j)
\end{equation}

To gain further intuition of the behavior of this quantity, we take $R_x(\Delta) = \rho^{|\Delta|}$, $\rho$ being a real number. A similar calculation to the one we did in Section~\ref{app_subsec:com_derivation} gives
\begin{align}
    S(\lambda_i, \lambda_j) &=  \frac{(1-\rho)(1+\lambda_i \lambda_j)+ \rho^2 (1 - \lambda_i\lambda_j)^3 + \rho(1-\rho)\lambda_i\lambda_j(\rho\lambda_i\lambda_j(1 + \lambda_i \lambda_j) - 2\lambda_i - 2\lambda_j)}{(1 - \lambda_i \lambda_j)^3(1-\rho\lambda_i)^2(1-\rho \lambda_j)^2}.
\end{align}
Given the complexity of this formula, we visualize the magnitude of $S(\lambda_i, \lambda_j)$ on Figure~\ref{fig:viz_S}. Interestingly, we observe this quantity is large when $\lambda_i$ and $\lambda_j$ are conjugate to each other and inputs are uncorrelated. However, as elements in the input sequence get more correlated ($\rho \rightarrow 1$), this effect disappears and $|S|$ increases as one of the two eigenvalue gets closer to 1 in the complex plane. In both cases, the effect gets amplified as the magnitude of the eigenvalue increases.

\begin{figure}[h]
    \centering
    \includegraphics{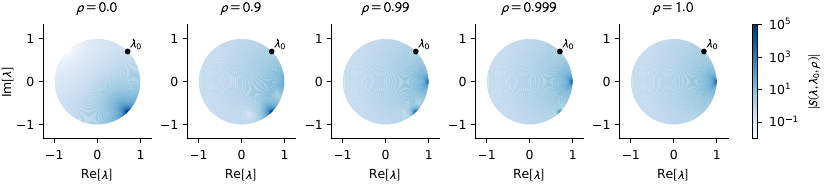}
    \caption{\textbf{Visualization of $\lambda \mapsto |S(\lambda, \lambda_0)|$ for $\lambda_0 = 0.99 \exp(i \pi / 4)$.} This term measures how "similar" eigenvalues are in the Hessian. When $\rho = 0$, eigenvalues are mostly "similar" when they are conjugate to each other. As $\rho$ increases, this effect decreases and eigenvalues become more "similar" if one of them gets close to 1.}
    \label{fig:viz_S}
\end{figure}

We also need to compute $\mathrm{d}_{\bar \lambda} \mathrm{d}_{\lambda} L$ to get the full complex Hessian. Similarly to the previous calculation, we first have
\begin{align}
\frac{\mathrm{d}L_t}{\mathrm{d} \bar\lambda} = \overline{\frac{\mathrm{d}\bar L_t}{\mathrm{d}\lambda}} = \overline{\frac{\mathrm{d} L_t}{\mathrm{d}\lambda}} = \frac{\partial L_t}{\partial y_t} \, \bar c^\top \, \overline{\frac{\mathrm{d}h_t}{\mathrm{d}\lambda}}.
\end{align}
It follows that
\begin{align}
\frac{\mathrm{d}^2 L}{\mathrm{d}\lambda_i \mathrm{d}\bar\lambda_j} &= \mathbb{E} \left [ \overline{\frac{\mathrm{d}h_{t, j}}{\mathrm{d}\lambda_j}} \bar c_j c_i \frac{\mathrm{d}h_{t, i}}{\mathrm{d}\lambda_i}  \right ]\\
&= c_i \bar c_j b_i \bar b_j S(\lambda_i, \bar\lambda_j).
\end{align}

Using the symmetry with the complex Hessian matrix, we now have all its components. 

\subsubsection{Hessian for different parametrizations}

So far, we have computed the complex Hessian, which is not of direct use as we end up optimizing real numbers in practice. Here, we study the impact of different parametrizations of $\lambda$ on the Hessian. Given that this parametrization only affects $\lambda$ and not the other parameters in the network and that we only consider the Hessian at optimality here, computing the Hessian of those parameters reduces to left and right multiplying the Hessian by derivatives of $\lambda$ and $\bar{\lambda}$ with respect to these parameters. For future reference, we introduce
\begin{equation}
    H_{ij}^\lambda := \left (
        \begin{array}{cc}
            \der{^2L}{\lambda_i \mathrm{d}\lambda_j} & \der{^2L}{\lambda_i \mathrm{d}\bar\lambda_j} \\
            \der{^2L}{\bar\lambda_i \mathrm{d}\lambda_j}  & \der{^2L}{\bar\lambda_i \mathrm{d}\bar\lambda_j} 
        \end{array}
    \right ) = \left (
        \begin{array}{cc}
            A_{ij} & B_{ij} \\
            \bar B_{ij} & \bar A_{ij}.
        \end{array}
    \right )
\end{equation}
with $A_{ij} := b_i b_j c_i c_j S(\lambda_i, \lambda_j)$ and $B_{ij} = b_i \bar b_j c_i \bar c_j S(\lambda_i, \bar \lambda_j)$.

\paragraph{Real-imaginary parametrization: $\lambda = \omega_\mathrm{re} + \omega_\mathrm{im}$.} We aim at computing the matrix
\begin{equation}
H^{\mathrm{RI}}_{ij} := \left (
\begin{array}{cc}
\frac{\mathrm{d}^2L}{\mathrm{d} \omega_{\mathrm{re}, i} \mathrm{d} \omega_{\mathrm{re}, j}} & \frac{\mathrm{d}^2L}{\mathrm{d} \omega_{\mathrm{re}, i} \mathrm{d} \omega_{\mathrm{im}, j}}  \\
\frac{\mathrm{d}^2L}{\mathrm{d} \omega_{\mathrm{im}, i} \mathrm{d} \omega_{\mathrm{re}, j}} & \frac{\mathrm{d}^2L}{\mathrm{d} \omega_{\mathrm{im}, i} \mathrm{d} \omega_{\mathrm{im}, j}}  
\end{array}
\right ) \!,
\end{equation}
which is the building block to compute the full Hessian. First, let us remark that $\mathrm{d}_{\omega_{\mathrm{re}, i}} \lambda_i = 1/2$, $\mathrm{d}_{\omega_{\mathrm{re}, i}} \bar\lambda_i = 1/2$, $\mathrm{d}_{\omega_{\mathrm{im}, i}} \lambda_i = i/2$ and $\mathrm{d}_{\omega_{\mathrm{im}, i}} \bar\lambda_i = -i/2$. It follows that
\begin{align}
    \frac{\mathrm{d}^2L}{\mathrm{d} \omega_{\mathrm{re}, i} \mathrm{d} \omega_{\mathrm{re}, j}} &= (\mathrm{d}_{\omega_{\mathrm{re}, j}} \lambda_j ~~ \mathrm{d}_{\omega_{\mathrm{re}, j}} \bar\lambda_j) H^\lambda_{ij} (\mathrm{d}_{\omega_{\mathrm{re}, i}} \lambda_i ~~ \mathrm{d}_{\omega_{\mathrm{re}, i}} \bar\lambda_i)^\top\\
    &= \frac{1}{4}(1 ~~ 1) H^\lambda_{ij} (1 ~~ 1)^\top\\
    &= \frac{1}{2} \left ( \mathrm{Re}[A_{ij}] + \mathrm{Re}[B_{ij}] \right ).
\end{align}
Once again we emphasize that the first line only holds as we are at optimality. Similar calculations give the rest of the elements of $H^{\mathrm{RI}}_{ij}$:
\begin{equation}
    \label{eqn:hessian_real_imaginary}
    H^{\mathrm{RI}}_{ij} := \frac{1}{2} \left (
    \begin{array}{cc}
    \mathrm{Re}[A_{ij} + B_{ij}] & \mathrm{Im}[-A_{ij} + B_{ij}]  \\
    \mathrm{Im}[-A_{ij} - B_{ij}] & \mathrm{Re}[-A_{ij} + B_{ij}].
    \end{array}
    \right ).
\end{equation}
Given the intuition we gained on the structure of $S$ previously, and the fact that $A_{ij} \propto S(\lambda_i, \lambda_j)$ and $B_{ij} \propto S(\lambda_i, \bar\lambda_j)$, we know that this block will have large components if the two corresponding eigenvalues are conjugate of each other or aligned to each other, or if one of them is close to 1.

One other quantity that we can calculate is the trace of the Hessian $H^\mathrm{RI}$, which is equal to the sum of its eigenvalues. Note that this does not correspond to the eigenvalues of the full Hessian matrix, as it additionally contains entries for other parameters. Yet it already provides some idea of how large the Hessian will be, as the value of this trace appears in the value of the full trace. We have
\begin{align}
    \mathrm{Tr}[H^\mathrm{RI}] &= \sum_{i} \frac{1}{2}\left ( \mathrm{Re}[A_{ii} + B_{ii}] + \mathrm{Re}[-A_{ii} + B_{ii}]\right ) \\
    &= \sum_{i} \mathrm{Re}[B_{ii}] \\
    &= \sum_{i} |b_i|^2 |c_i|^2 S(\lambda_i, \bar{\lambda}_i)
\end{align}
where we used that $S(\lambda_i, \bar \lambda_i)$ is real-valued in the last line. As a side note, this formula partly justifies why studying the expected squared magnitude of $\mathrm{d}_\lambda h_t$ in Section~\ref{sec:curse_memory} makes general sense, as
\begin{equation}
    \E \left [ \left | \der{h_{t, i}}{\theta} \right |^2 \right ] = |b_i|^2 S(\lambda_i, \bar \lambda_i).
\end{equation}

\paragraph{Magnitude-angle parametrization: $\lambda = \nu(\omega_\nu) \exp(i \theta(\omega_\theta))$.} The calculations for this parametrization are similar to the previous one, with the following differences:
\begin{align}
    \der{\lambda}{\omega_\nu} &= \frac{\nu'(\omega_\nu) \exp(i\theta(\omega_\theta))}{2}\\
    \der{\bar\lambda}{\omega_\nu} &= \frac{\nu'(\omega_\nu) \exp(-i\theta(\omega_\theta))}{2}\\
    \der{\lambda}{\omega_\theta} &= \frac{i \nu(\omega_\nu) \theta'(\omega_\theta) \exp(i\theta(\omega_\theta))}{2}\\
    \der{\bar\lambda}{\omega_\theta}  &= -\frac{i \nu(\omega_\nu) \theta'(\omega_\theta) \exp(-i\theta(\omega_\theta))}{2}.
\end{align}
After some calculations we obtain
\begin{align}
    \frac{\mathrm{d}^2L}{\mathrm{d} \omega_{\nu,i} \, \mathrm{d} \omega_{\nu,j}} &= \frac{\nu'(\omega_{\nu, i}) \nu'(\omega_{\nu, j})}{2} \mathrm{Re}[\mathrm{e}^{i(\theta(\omega_{\theta,i}) + \theta(\omega_{\theta,j})}A_{ij} + \mathrm{e}^{i(\theta(\omega_{\theta,i}) - \theta(\omega_{\theta,j}))}B_{ij}] \\
    \frac{\mathrm{d}^2L}{\mathrm{d} \omega_{\nu,i} \, \mathrm{d} \omega_{\theta,j}} &= \frac{\nu'(\omega_{\nu, i}) \nu(\omega_{\nu, j}) \theta'(\omega_{\theta, j})}{2} \mathrm{Im}[-\mathrm{e}^{i(\theta(\omega_{\theta,i}) + \theta(\omega_{\theta,j})}A_{ij} + \mathrm{e}^{i(\theta(\omega_{\theta,i}) - \theta(\omega_{\theta,j}))}B_{ij}] \\
    \frac{\mathrm{d}^2L}{\mathrm{d} \omega_{\theta,i} \, \mathrm{d} \omega_{\nu,j}} &= \frac{\nu(\omega_{\nu, i}) \theta'(\omega_{\theta, i})\nu'(\omega_{\nu, j})}{2} \mathrm{Im}[-\mathrm{e}^{i(\theta(\omega_{\theta,i}) - \theta(\omega_{\theta,j})}A_{ij} + \mathrm{e}^{i(\theta(\omega_{\theta,i}) - \theta(\omega_{\theta,j}))}B_{ij}] \\
    \frac{\mathrm{d}^2L}{\mathrm{d} \omega_{\theta,i} \, \mathrm{d} \omega_{\theta,j}} &= \frac{\nu(\omega_{\theta, i}) \theta'(\omega_{\theta, i}) \nu(\omega_{\theta, j}) \theta'(\omega_{\theta, j})}{2} \mathrm{Re}[-\mathrm{e}^{i(\theta(\omega_{\theta,i}) + \theta(\omega_{\theta,j})}A_{ij} + \mathrm{e}^{i(\theta(\omega_{\theta,i}) - \theta(\omega_{\theta,j}))}B_{ij}]
\end{align}

\subsection{Experimental details}

We use the linear RNN architecture defined in Appendix~\ref{app:detail_rnn} as teacher and implement our experiments in JAX \cite{bradbury_jax_2018}, using the default Flax \cite{heek_flax_2023} implementation of RNNs and the LRU implementation of \citet{zucchet_minimal_2023}. Code is available \href{https://github.com/NicolasZucchet/Vanishing-and-exploding-gradients-are-not-the-end-of-the-story/tree/main}{here}: \href{https://github.com/NicolasZucchet/Vanishing-and-exploding-gradients-are-not-the-end-of-the-story/tree/main}{https://github.com/NicolasZucchet/Vanishing-and-exploding-gradients-are-not-the-end-of-the-story/tree/main}. 

We initialize RNNs in the same way we initialized the teacher, and initialize the eigenvalues of the LRU and other complex-valued networks with magnitude in $[\nu_0, 1]$ and angle within $[-\theta_0, \theta_0]$.

Given that we are interested in the optimization properties of the different architectures, we only report training losses and do not perform any cross validation.

Here are additional details related to the different figures:
\begin{itemize}
    \item[--] \textbf{Figure~\ref{fig:comparison_RNN}}: see Tables~\ref{tab:fig3A} and \ref{tab:fig3B}.
    \item[--] \textbf{Figure~\ref{fig:figure3} and \ref{fig:landscape_LRU}}: for panels A and B, we use $\nu_0 = 0.99$ and draw $A$ in a slightly different manner to the one described above (we directly draw the eigenvalues and eigenvectors so that we have two pairs of complex eigenvalues). We use automatic differentiation to compute the Hessian. For panels C and D, we use the same setup as described in Table~\ref{tab:fig3B}, but keep the learning rate constant over the course of learning. We report the effective learning rate at the end of learning.
    \item[--] \textbf{Figure~\ref{fig:eigval_concentration}}: for panels A, B and C, we draw the magnitude and angle of $10$ $\lambda$ values independently, uniformly in $[\nu_0, \frac{1+\nu_0}{2}]$ and $[-\theta_0, \theta_0]$. Importantly, this means that there are no conjugate pairs, which leads to more diagonal Hessian matrices at optimality than in Figure~\ref{fig:figure3}. For panel D, see Table~\ref{tab:fig9}. 
    \item[--] \textbf{Figure \ref{fig:lsi_multiple_heads}}: same setup as for Figure~\ref{fig:comparison_RNN}.
\end{itemize}

As a rule of thumb, each LRU (or complex-valued diagonal network) experiment takes 3 minutes on a consumer-scale GPU (NVIDIA GeForce RTX 3070) and each RNN experiment takes 10 minutes on a CPU. The scans behind the results reported in the different figures require on the order of few hundreds run each. Including our preliminary explorations, the results we report in this section required 30 days of compute, one third of it on GPUs and two thirds on CPUs.

\begin{table}[]
    \begin{tabular}{lcc}
    \toprule
                                  & \textsc{RNN} & \textsc{LRU} \\ \midrule
Batch size                   & \multicolumn{2}{c}{$128$}                                                                                         \\
Sequence length                   & \multicolumn{2}{c}{$300$}                                                                                         \\
Hidden neurons (teacher)          & \multicolumn{2}{c}{$10$}                                                                                          \\
Input / output dimension          & \multicolumn{2}{c}{$1$}                                                                                           \\
$\nu_0$                             & \multicolumn{2}{c}{$\{0.32, 0.68, 0.84, 0.92, 0.96, 0.98, 0.99\}$}                                                \\
$\theta_0$                        & \multicolumn{2}{c}{$\pi$}                                                                                       \\ \midrule
Hidden neurons (student)          & \multicolumn{2}{c}{$64$}     \\
$\log$ learning rate              & $[-5, -4.5, -4, -3.5, -3, -2.5]$                      & $[-2.5, -2, -1.5, -1, -0.5]$                            \\
Optimizer (schedule) & \multicolumn{2}{c}{Adam (cosine)}   \\
Initialization                    & $[\nu_0$ teacher, $\nu_0 = 0]$                        & $\nu_0$ teacher                                           \\ \midrule
Number iterations & \multicolumn{2}{c}{$10$k}\\
Seeds                             & \multicolumn{2}{c}{$10$}  \\                                                                                       \bottomrule
\end{tabular}
    \caption{\textbf{Experimental details for Figure~\ref{fig:comparison_RNN}.A.} We use $[\cdots]$ to denote hyperparameters that were scanned over with grid search and $\{ \cdots \}$ to denote the variables of interest for the figure. We chose the learning rates for the two architectures on preliminary scans and verified that non of the extreme learning rates were optimal in the final scan. For the RNN, we found that initializing with $\nu_0=0$ gave better results than initializing with the same distribution the teacher has, so we included this choice in the scan.}
    \label{tab:fig3A}
\end{table}

\begin{table}[]
    \begin{tabular}{lcc}
    \toprule
                                  & \multicolumn{1}{l}{\textsc{RNN / Block diag. RNN}} & \multicolumn{1}{l}{\textsc{Complex diag. RNN / LRU}} \\ \midrule
Batch size                   & \multicolumn{2}{c}{$128$}                                                                                         \\
Sequence length                   & \multicolumn{2}{c}{$300$}                                                                                         \\
Hidden neurons (teacher)          & \multicolumn{2}{c}{$10$}                                                                                          \\
Input / output dimension          & \multicolumn{2}{c}{$1$}                                                                                           \\
$\nu$                             & \multicolumn{2}{c}{$0.99$}                                            \\
$\theta_0$                        & \multicolumn{2}{c}{$\pi$}                                                                                       \\ \midrule
Hidden neurons (student)          & $64$ / $64$ and $128$                                       & $64$                                                      \\
$\log$ learning rate              & $[-5, -4.5, -4, -3.5, -3, -2.5]$                      & $[-2.5, -2, -1.5, -1, -0.5]$                            \\
Optimizer (schedule) & \multicolumn{2}{c}{Adam (cosine)}  \\
Initialization                    & $[\nu_0$ teacher, $\nu_0 = 0]$                        & $\nu_0$ teacher                                           \\ \midrule
Number iterations & \multicolumn{2}{c}{$10$k}\\
Seeds                             & \multicolumn{2}{c}{10}   \\                                                                                       \bottomrule
\end{tabular}
    \caption{\textbf{Experimental details for Figure~\ref{fig:comparison_RNN}.B.} We use $[\cdots]$ to denote hyperparameters that were scanned over with grid search and $\{ \cdots \}$ to denote the variables of interest for the figure. We chose the learning rates for the two architecture types on preliminary scans and verified that non of the extreme learning rates were optimal in the final scan. For the RNN, we found that initializing with $\nu_0=0$ gave better results than initializing with the same distribution the teacher has, so we included this choice in the scan. For the RNNs, we used $64$ neurons for the "RNN" entry, $64$ for the "block diagonal" one, and 128 for the "more neurons" one.}
    \label{tab:fig3B}
\end{table}

\begin{table}[]
    \begin{tabular}{lcc}
    \toprule
                                  & \textsc{RNN} & \textsc{Complex diag. RNN / LRU} \\ \midrule
Batch size                   & \multicolumn{2}{c}{$128$}                                                                                         \\
Sequence length                   & \multicolumn{2}{c}{$300$}                                                                                         \\
Hidden neurons (teacher)          & \multicolumn{2}{c}{$10$}                                                                                          \\
Input / output dimension          & \multicolumn{2}{c}{$1$}                                                                                           \\
$\nu_0$                             & \multicolumn{2}{c}{$0.99$}                                                \\
$\log(\theta_0/\pi)$              & \multicolumn{2}{c}{$\{ -2, -1.5, -1, -0.5, 0\}$}                                                                                       \\ \midrule
Hidden neurons (student)          & \multicolumn{2}{c}{$64$}     \\
$\log$ learning rate              & $[-4.5, -4, -3.5, -3]$                      & $[-3.5, -3, \cdots, -0.5, 0]$                            \\
Optimizer (schedule) & \multicolumn{2}{c}{Adam (cosine)}   \\
Initialization                    & $[\nu_0$ teacher, $\nu_0 = 0]$ + $\theta_0$ teacher                        & $\nu_0$ teacher + $\theta_0$ teacher                                         \\ \midrule
Number iterations & \multicolumn{2}{c}{$10$k}\\
Seeds                             & \multicolumn{2}{c}{$10$}    \\                                                                                     \bottomrule
\end{tabular}
    \caption{\textbf{Experimental details for Figure~\ref{fig:eigval_concentration}.} We use $[\cdots]$ to denote hyperparameters that were scanned over with grid search and $\{ \cdots \}$ to denote the variables of interest for the figure. We chose the learning rates for the two architectures on preliminary scans and verified that non of the extreme learning rates were optimal in the final scan. For the RNN, we found that initializing with $\nu_0=0$ gave better results than initializing with the same distribution the teacher has, so we included this choice in the scan.}
    \label{tab:fig9}
\end{table}

\subsection{Additional analyses}

\subsubsection{Structure of the loss landscape for LRUs and S4}

In the main text, we only provide an analysis of the loss landscape for the fully connected linear recurrent neural network and its complex-valued diagonal counterpart. We here complete this result by performing the same analysis for the LRU and S4. Given that S4 involves some form of parameter sharing between the magnitude and the angle of the recurrence complex eigenvalues through $\Delta$, which is required to avoid the explosion of the angle gradients, we are particularly interested in observing whether it fully mitigates or not the gradient explosion effect. The results of Figure~\ref{fig:landscape_LRU}.B and D do not reveal such benefits: the loss landscape does not have high curvature on the $\omega_A^\mathrm{im}$ direction, but it is moved in the $\omega_\Delta$ direction. We haven't investigated whether qualitative changes arise when changing the input distribution.

\begin{figure}[h]
    \centering
    \includegraphics{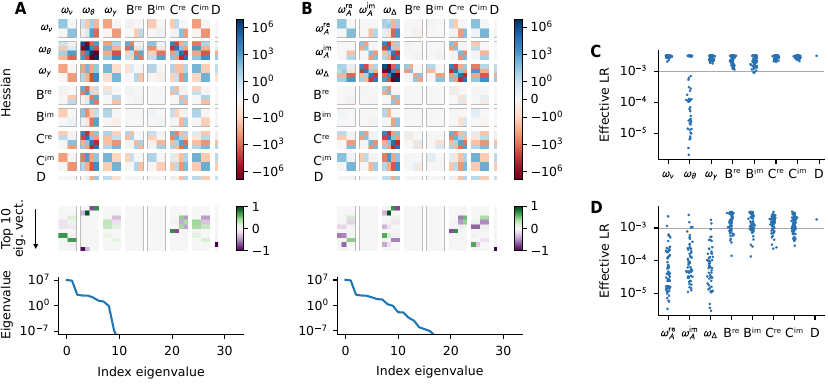}
    \caption{\textbf{Equivalent of Figure~\ref{fig:figure3} for the LRU (A, C) and S4 (B, D).} In the LRU, the exponential parametrization of the magnitude $\nu = \exp(-\exp(\omega_\nu))$ efficiently mitigates the Hessian explosion but not the one of the angle $\theta = \exp(\omega_\theta)$, consistently with the theoretical and empirical evidence we have accumulated so far. In B, $\Delta$ is set to 0.01 in the S4 architecture. Setting it to $1$ (not plotted here) leads to an Hessian at optimality that has large entries on all recurrent parameters $\omega_A^\mathrm{re}$,  $\omega_A^\mathrm{im}$ and $\omega_\Delta$, similarly to the behavior we observed for the complex diagonal RNN studied in the main text. For panel D, we initialized $\Delta$ at 1 given that this is the initialization that yielded best performance. We note that we didn't find qualitative changes in this plot when changing the initialization scheme of S4.
}
    \label{fig:landscape_LRU}
\end{figure}

\subsubsection{Concentrating eigenvalue distributions}

The goal of this experiment is to better understand how the concentration of eigenvalues $\lambda$ affect the learning dynamics. For fully connected RNNs, there is no reason to expect a major change in behavior. However, it is different for diagonal RNNs. The theoretical analysis we have done in Section~\ref{subsec:app_structure_hessian} provides us with the following insights. When the elements in the input sequence are uncorrelated, as it is the case here, the entries in the Hessian corresponding to two different eigenvalues increase if they are aligned or conjugate to each other, and if their magnitude is large. We therefore expect that, as the interval on which the angle of the teacher's eigenvalues shrinks ($\theta_0 \rightarrow 0$), those eigenvalues will be more likely to be "similar" to each other. This results in large non-diagonal terms, as we confirm in Figure~\ref{fig:eigval_concentration}.A, B and C. The LRU suffers less from this problem thanks to its reparametrization, which reduces the overall magnitude of Hessian entries related to the magnitude, and partly the one of angle parameters (when it is a small positive number). As a consequence, the performance between these two architectures increases as $\theta_0 \rightarrow 0$, as seen on Figure~\ref{fig:eigval_concentration}.D.

\begin{figure}[h]
    \centering
    \includegraphics{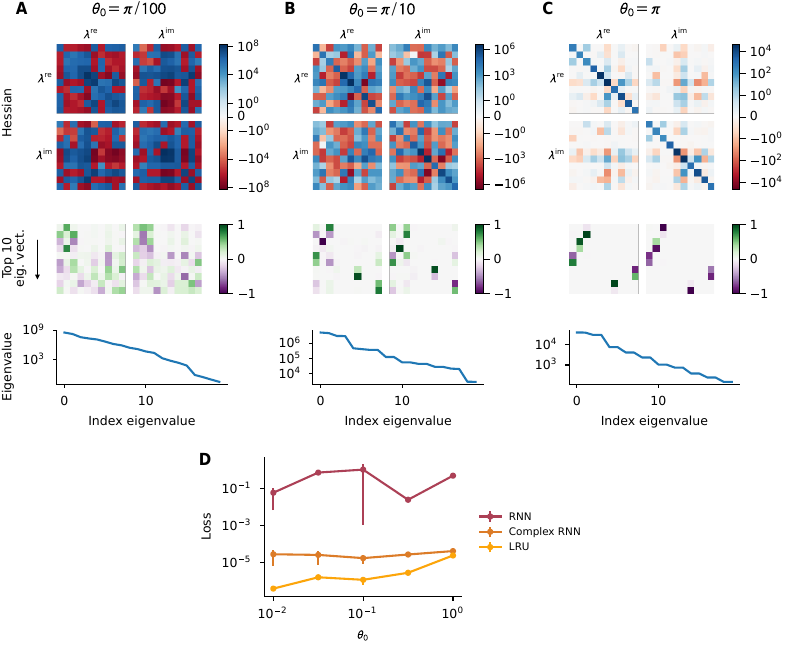}
    \caption{\textbf{Concentrating eigenvalues make the Hessian less diagonal ($\theta_0 \rightarrow 0$) and consequently increases the gap between the LRU and the complex-valued diagonal RNN.} \textbf{A, B, C.} Hessian of the loss with respect to the $\lambda$ parameters in the complex-valued diagonal RNN. The Hessian is computed through the theoretical formula of Equation~\ref{eqn:hessian_real_imaginary}; computing it numerically marginally affects the results. Consistently with the intuition we developed in Section \ref{subsec:app_structure_hessian}, concentrating the eigenvalues affect the structure of the loss landscape. It makes the Hessian at optimality less diagonal and Adam cannot efficiently compensate it. The LRU does not suffer as much from this problem, and the gap between the two architecture widens as $\theta_0 \rightarrow 0$.}
    \label{fig:eigval_concentration}
\end{figure}

\subsubsection{Impact of the number of heads in fully connected linear recurrent networks}

In Figure~\ref{fig:comparison_RNN}, we have shown that constraining the connectivity matrix to be block diagonal with blocks of size $2 \times 2$ lead to a critical boost in performance. Further analysis revealed that this arises as the Hessian becomes more diagonal and Adam can thus better compensate for gradients explosion. Here, we study this behavior in more detail by interpolating between the fully connected case and the block diagonal one. This can be achieved by increasing the number of independent heads from $1$ (fully connected case) to $32$ ($2\times 2$ block-diagonal connectivity matrix, as we have 64 hidden neurons). In particular, we are interested in understanding how big heads can be while keeping this performance boost. We plot the final performance, as well as the evolution of the effective learning rate for the $A$ matrix over the course of learning on Figure~\ref{fig:lsi_multiple_heads}. We find that slightly bigger heads, until $4 \times 4$ (which corresponds to $16$ heads), yield similar benefits. Additionally, the learning rate analysis reveals that the adaptive learning rates of Adam can more selectively compensate for potential gradient explosion cases as the number of heads increases, allowing for bigger learning rates overall and better performance.

\begin{figure}[h]
    \centering
    \includegraphics{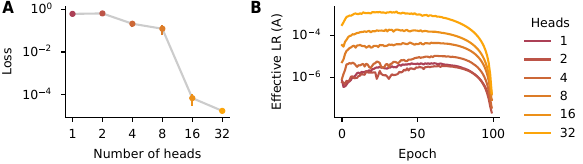}
    \caption{Evolution of the performance (\textbf{A}) and effective learning rates for the $A$ connectivity matrix (\textbf{B}) of a linear recurrent neural network as we vary the number of heads, keeping the overall number of hidden neurons fixed. It should be noted that increasing the number of heads decrease the total number of parameters, as the matrix $A$ gets sparser.}
    \label{fig:lsi_multiple_heads}
\end{figure}

\newpage

\section{Signal propagation in randomly initialized deep recurrent neural networks}

\subsection{Experimental setup}
\label{subsec:app_exp_setup_practice}

We detail the experimental setup used in Section~\ref{sec:realistic}. We select the first 512 tokens from 1024 random sequences in the Wikipedia dataset \cite{foundation_wikimedia_nodate} and pass them through the BERT \cite{devlin_bert_2019} tokenizer and embedding layer. This yields a dataset of 1024 examples with length 512 and feature dimension 724. Figure~\ref{fig:autocorrelation-wiki} shows the autocorrelation function of these inputs, revealing that the i.i.d. assumption serves ($\rho = 0$) as a good first order approximation. This validates the relevance of our toy experiments for studying signal propagation in more realistic settings. To refine this approximation, we can include a high correlation term ($\rho$ close to $1$).

\begin{figure}
    \centering
    \includegraphics[]{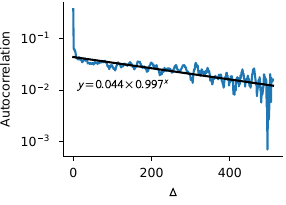}
    \caption{\textbf{The empirical autocorrelation function (averaged over feature dimensions) of the BERT embeddings used in Section~\ref{sec:realistic} can be approximated as a sum of two exponentially decaying functions.} The blue line represents the autocorrelation function $R^{\mathrm{empirical}}_x(\Delta)$ of the BERT embeddings of the Wikipedia dataset. As a first approximation, it is equal to $R^{\mathrm{empirical}}_x(\Delta) \approx 0.376 \delta_{\Delta = 0}$. For a more refined approximation, we perform a linear regression of the log autocorrelation against $\Delta$, shown by the black line. This yields the following approximation: $R^{\mathrm{empirical}}_x(\Delta) \approx 0.332 \delta_{\Delta = 0} + 0.044 \times 0.997^\Delta$.}
    \label{fig:autocorrelation-wiki}
\end{figure}

We examine realistic networks comprising 4 blocks with the following structure: a recurrent layer, a non-linearity, a gated linear unit \cite[][GLU]{dauphin_language_2017} and a skip connection. By default, we omit normalization layers, but when included, as in Figure~\ref{fig:practice}.C, we place one normalization layer before the recurrent layer and another one before the GLU. All the layers involved contain 256 neurons. We also incorporate a linear encoder at the beginning of the network and a linear decoder at its end. 

The loss that we use is a next-token mean-squared error, defined as
\begin{equation}
    L_t = \frac{1}{2} \lVert \hat{x}_t(x_{1:t-1}) - x_t\rVert^2
\end{equation}
where $\hat{x}_t(x_{1:t-1})$ represents the prediction of the network. Figure~\ref{fig:practice} reports the average squared value of the hidden state or the gradient. This average is computed over sequences, but also over all neurons / parameters and over all time steps. We compute gradients using batches of size 8. 

In Figure~\ref{fig:practice} we vary $\nu_0$, which controls the magnitude of the eigenvalues of the recurrent Jacobian. Specifically, we sample those magnitudes in the interval $[\nu_0, (1+\nu_0)/2]$. For the complex-valued diagonal RNN and the LRU, we apply the LRU initialization. For the LSTM, we use the Chrono initialization proposed by \citet{tallec_can_2018}: it initializes the forget gate biases such that, when the input $x$ and the hidden state $h$ are equal to 0, the time constant associated to $f$ is uniformly sampled from $[\frac{1}{1-\nu_0}, \frac{2}{1-\nu_0}]$ and the input gate $i$ is equal to $1 - f$.

While Figure~\ref{fig:practice}.B presents aggregated gradients over layers, Figure~\ref{fig:layer-wise-practice} offers a layer-wise version of this analysis. It reveals that the layer index does not significantly impact gradient magnitude. Surpisingly, given that the hidden states of the complex RNN gets larger with depth (c.f. Figure~\ref{fig:practice}.A), this result might seem unexpected for cRNNs. We can attribute this to the backpropagated error signals also being amplified during the backward pass, as discussed in Section~\ref{subsec:sig_prop_diag}. In the first layers, hidden states are small and errors are large, while in the in last layers, errors are small and hidden states are large. Consequently, the gradient magnitude remains relatively constant accross layers. For the GRU, the gradient magnitude reported in Figure~\ref{fig:practice} for the non-GRU parameters included the linear encoder and decoder. As the encoder gradients are the dominating ones, this explains why the gradient magnitude for the non-GRU parameters is smaller in the layer-wise analysis.

\begin{figure}
    \centering
    \includegraphics{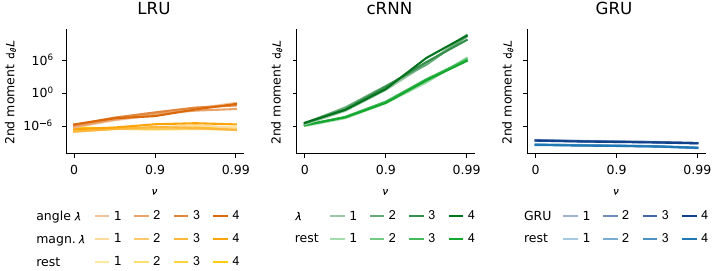}
    \caption{\textbf{Gradient magnitudes are independent of the layer.} This figure presents similar plots to Figure~\ref{fig:practice}.B, except that parameters from different layers are no longer aggregated together. Instead, each parameter group of each layer has its own line. The indices in the legend correspond to layer indices.}
    \label{fig:layer-wise-practice}
\end{figure}

\subsection{Can gated RNNs be considered diagonal?}
\label{subsec:app_diagonality}

In Section~\ref{subsec:com_architectures}, we argued that the diagonal linear setting we focused our theory on can be a decent proxy for more general gated RNNs, whose $\lambda$ values can depend on both the inputs $x$ and the hidden state $h$. Here, we assess whether this holds true at initialization. To that end, we study how the Jacobian $\der{h_{t+\Delta}}{h_{t}}$ of a GRU evolves as $\Delta$ grows. For the linear diagonal regime to be a good proxy, two necessary conditions must be met: First, all the non-diagonal terms should be negligible compared to the diagonal ones. Second, the diagonal terms should decay similarly as if their $\lambda$ values were independent of $x$ and $h$.

In Figure~\ref{fig:recurrent_jacobian_gru}, we report the evolution of all the diagonal terms of this Jacobian and a random subset of the non-diagonal ones. Our findigns indicate that the non-diagonal terms are much smaller than the diagonal ones under the standard initialization, supporting the first necessary condition. Additionally, input and hidden state-dependent gates do not qualitatively change the decay of the diagonal terms, particularly when the network is initialized with $\lambda$ values close to $1$ (which correspond to large time constants). Furthermore, we find that increasing the strength of all the hidden state to gate connections ($W_{hn}$, $W_{hr}$, $W_{hz}$) breaks the diagonal-like behavior and eventually leads to exploding Jacobians. However, this only occurs at values that are much larger (3 times more in this case) than the default initialization of these weights (orthogonal initialization).

\begin{figure}
    \centering
    \includegraphics[]{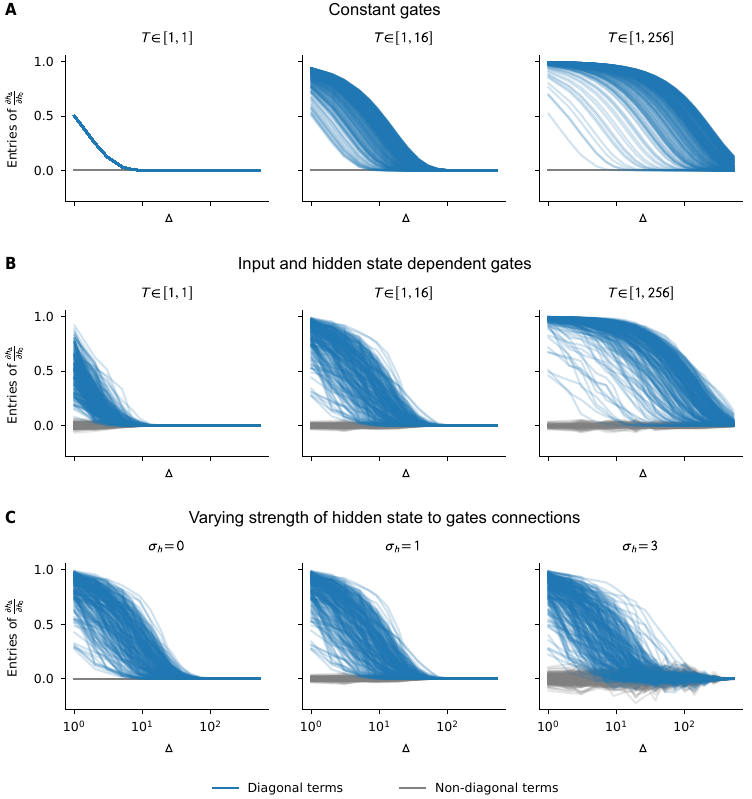}
    \caption{\textbf{GRUs behave like diagonal linear networks.} This figure illustrates the evolution of the recurrent Jacobian $\der{h_{\Delta}}{h_0}$ of a GRU, when provided with the BERT embeddings of a sentence extracted from the Wikipedia dataset. \textbf{A.} In the first row, we take the forget gates to be independent of $x$ and $h$ and we sample them with the Chrono initialization \cite{tallec_can_2018} for different intervals. The resulting network is linear and diagonal, similar to what we have studied in the theory. These plots therefore serve as visual reference for comparison with the realistic case. \textbf{B.} The second row shows the same plots as the previous row, except that the gates are now dependent on the inputs $x$ and on the hidden states $h$. As mentioned in \ref{subsec:app_GRU}, we initialize all the linear layers taking $x$ as input with LeCun initialization and the layers taking $h$ as input with orthogonal initialization. The recurrent Jacobian evolves similarly than in the constant case, particularly on slowly decaying dimensions (large $T$ values). \textbf{C.} In the row, we aim to break the diagonality of the model by increasing the strength $\sigma_h$ of the hidden state to gate connections, for $T \in [1, 16]$. The case $\sigma_h=0$ corresponds to gates that only depend on $x$, similar to architectures like Mamba or Hawk. The plot with $\sigma_h=1$ is the same as the middle one in B. For $\sigma_h = 3$ and higher, the diagonality progressively breaks and the recurrent Jacobian eventually explodes. We note that these plots were obtained from a single example. Yet, we have found the behavior we report here to be typical of the general behavior.}
    \label{fig:recurrent_jacobian_gru}
\end{figure}

In conclusion, those results confirm that the theoretical setting we have considered in this paper is a good proxy for studying signal propagation in realistic recurrent networks. While we have focused our analysis on GRUs, we expect these results to hold for other architectures such as LSTMs, Mamba, or Hawk. For the last two, given that the different gates only depend on the input, we expect the matching with our theory to be even stronger (c.f. Figure~\ref{fig:recurrent_jacobian_gru}.C $\sigma_h = 0$, which captures this regime).

\subsection{Does our theory apply to gated RNNs?}
\label{subsec:app_theory_for_GRU}

Having established that gated RNNs behave similarly to the linear diagonal RNNs considered in our theoretical investigation, a question arises: How well can our theory describe signal propagation in gated RNNs on realistic data? To address this, we study a simplified version of the GRU network (similar to the one studied by \citet{chen_dynamical_2018}), which incorporates a realistic gating mechanism:
\begin{align}
    f_{t+1} &= \sigma(W_{fx} x_{t+1} + W_{fh} h_t + b_f) \\
    h_{t+1} &= f_{t+1} \odot h_{t} + (1 - f_{t+1}) \odot x_{t+1}.
\end{align}
As in the rest of this section, we provide BERT embeddings of sentences from the Wikipedia dataset as inputs.

To apply our theory to this architecture, we must address two main challenges:
\begin{enumerate}
    \item The gate $f_{t+1}$ depends on both $h_{t+1}$ and $x_t$, making it non-constant. Based on our empirical results from the previous section, we reasonably approximate $f_{t+1} \approx \sigma(b_f) =: \lambda$, ignoring this dependency.
    \item Our theoretical derivations have overlooked cases where the recurrence strength $\lambda$ normalizes the input $x_t$. When considered, we detached the normalization factor from the computational graph (as in Section~\ref{subsubsec:app_difficulty_complex}). However, we can extend our calculations from Section~\ref{app_subsec:com_derivation} to accommodate this setting:
    \begin{align}
        f(\alpha, \beta) & := \frac{(1 - \alpha) (1 - \beta)}{1 - \alpha \beta} \left ( R_x(0) + \sum_{\Delta \geq 1} (\alpha^\Delta + \beta^\Delta) R_x(\Delta) \right ) \\
        \E [h_t^2] & = f(\lambda, \lambda)\\
        \E \left [ \left ( \frac{\partial h_t}{\partial \lambda} \right ) ^2 \right ] &= \left . \frac{\partial^2 f(\alpha, \beta)}{\partial \alpha \partial \beta} \right |_{\alpha = \lambda, \beta = \lambda}.
    \end{align}
    Note that we obtain $\E [ (\frac{\partial^2 h_t}{\partial {b_f}^2})^2 ]$ by multiplying $\E [ (\frac{\partial h_t}{\partial \lambda})^2 ] $ by $\sigma'(b_f)^2$.
\end{enumerate}

With these adjustments in place, we can now empirically test our theoretical predictions. For simplicity, we approximate the auto-correlation function $R_x$ (blue line in Figure~\ref{fig:autocorrelation-wiki}) as $R_x(\Delta) \approx 0.332 \delta_{\Delta = 0} + 0.044 \times 0.997^\Delta$. Figure~\ref{fig:theory_vs_practice_gru} presents our results, which reveal:
\begin{itemize}
\item An almost perfect match between theory and practice for constant gates, confirming that our sample size is large enough.
\item A very precise, though not perfect, match for context-dependent gates.
\item The variance of $h_t$ and $\frac{\partial h_t}{\partial {b_f}}$ shows minimal dependence on $\lambda$, indicating that the magnitude of error signals received by $b_f$, $W_{fh}$, and $W_{fx}$ are largely independent of the time constants encoded in the network.
\end{itemize}

\begin{figure}
    \centering
    \includegraphics{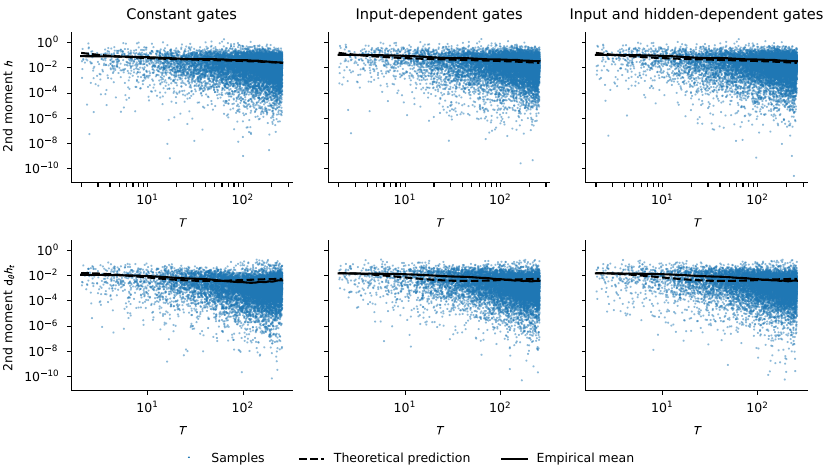}
    \caption{\textbf{The theory developed for linear diagonal recurrent networks captures signal propagation within gated recurrent neural networks.} The different samples were obtained as follows: 100 different randomly initialized networks are given a different input sequence of length $512$. The biases of the forget gates $b_f$ are initialized with Chrono initialization for $T \in [1, 256]$. For each of these models / sequences, we measure $h_{512, i}^2$ and $\sder{h_{512,i}}{{b_{f,i}}}^2$ ($i$ being the index of one of the 256 hidden neurons). We report this measurement as a function of the time constant $T$ encoded by the neuron ($\lambda = T / 1 + T$). The empirical mean is obtained with a kernel regression with the Gaussian kernel $\mathcal{K}(a, b) := \exp(-(a-b)^2 / 100)$. The theoretical prediction comes from the approach described in Section~\ref{subsec:app_theory_for_GRU}.}
    \label{fig:theory_vs_practice_gru}
\end{figure}

%%%%%%%%%%%%%%%%%%%%%%%%%%%%%%%%%%%%%%%%%%%%%%%%%%%%%%%%%%%%

\end{document}